%% file: main.tex
\documentclass[11pt]{article}

\usepackage[margin=1in]{geometry}

\usepackage{amsmath,amssymb,amsthm}

\usepackage{graphicx}
\usepackage{subcaption}

\usepackage{booktabs}

\usepackage[numbers,sort&compress]{natbib}

\usepackage[colorlinks=true,citecolor=blue,linkcolor=blue,urlcolor=blue]{hyperref}

\usepackage{xcolor}

\newcommand{\reals}{\mathbb{R}}
\newcommand{\expect}{\mathbb{E}}
\newcommand{\lr}{\eta}                    
\newcommand{\drexp}{\alpha}               
\newcommand{\conslaw}{C_l}                
\newcommand{\imbal}{S}                    
\newcommand{\hwidth}{m}                   
\newcommand{\inputdim}{d}                 
\newcommand{\nsamples}{n}                 
\newcommand{\nsteps}{T}                   
\newcommand{\nclasses}{K}                 
\newcommand{\spectau}{\tau}               
\newcommand{\frobnorm}[1]{\|#1\|_F}      
\newcommand{\tr}{\operatorname{tr}}       

\newtheorem{theorem}{Theorem}
\newtheorem{proposition}[theorem]{Proposition}

\theoremstyle{definition}

\theoremstyle{remark}
\newtheorem{remark}[theorem]{Remark}
\newtheorem*{theorem*}{Theorem}

\title{\textbf{Conservation Law Breaking at the Edge of Stability:\\A Spectral Theory of Non-Convex Neural Network Optimization}}

\author{
  Daniel Nobrega Medeiros \\
  University of Colorado Boulder \\[4pt]
  \small \href{https://github.com/danielxmed}{github.com/danielxmed} $\cdot$
  \href{https://huggingface.co/tylerxdurden}{huggingface.co/tylerxdurden} $\cdot$
  \href{https://linkedin.com/in/daniel-nobrega-dnm}{linkedin.com/in/daniel-nobrega-dnm}
}

\date{}

\begin{document}
\maketitle

\begin{abstract}
Why does gradient descent reliably find good solutions in non-convex neural network optimization, despite the landscape being NP-hard in the worst case?
We show that gradient flow on $L$-layer ReLU networks without bias preserves $L{-}1$ conservation laws $\conslaw = \frobnorm{W_{l+1}}^2 - \frobnorm{W_l}^2$, confining trajectories to lower-dimensional manifolds.
Under discrete gradient descent, these laws break with total drift scaling as $\lr^\drexp$ where $\drexp \approx 1.1$--$1.6$ depends on architecture, loss function, and width.
We decompose this drift exactly as $\lr^2 \cdot \imbal(\lr)$, where the gradient imbalance sum $\imbal(\lr)$ admits a closed-form spectral crossover formula $\imbal(\lr) = \sum_k c_k (1 - \rho_k^{2\nsteps}) / [\lr \lambda_k (2 - \lr\lambda_k)]$ with $\rho_k = 1 - \lr\lambda_k$.
We derive the mode coefficients $c_k \propto e_k(0)^2 \lambda_{x,k}^2$ from first principles and validate them for both linear ($R = 0.85$) and ReLU ($R > 0.80$) networks.
For cross-entropy loss, softmax probability concentration drives exponential Hessian spectral compression with timescale $\spectau = \Theta(1/\lr)$ independent of training set size---explaining why cross-entropy self-regularizes the drift exponent near $\drexp \approx 1.0$.
Finally, we identify two dynamical regimes separated by a width-dependent transition: a perturbative sub-Edge-of-Stability regime where the spectral formula applies, and a non-perturbative regime with extensive mode coupling.
All predictions are validated across 23~experiments.
\end{abstract}

\section{Introduction}
\label{sec:intro}

The loss landscape of deep neural networks presents a fundamental paradox. The optimization problem is provably NP-hard in the worst case~\citep{choromanska2015loss}, with exponentially many critical points, and yet simple gradient descent finds good solutions with remarkable reliability across architectures, datasets, and tasks. A growing body of work has identified contributing factors---overparameterization eliminates spurious local minima~\citep{du2019gradient,allenzhu2019convergence}, the Neural Tangent Kernel (NTK) provides convergence guarantees in the lazy regime~\citep{jacot2018neural}, and mean-field theory establishes global convergence for infinite-width networks~\citep{mei2018mean,chizat2018global}. Yet none of these frameworks explains the \emph{mechanism} by which practical, finite-width networks navigate the non-convex landscape.

We propose that the answer lies in \emph{conservation laws and their structured breaking}. For $L$-layer homogeneous networks (ReLU activation, no bias), the layer-wise rescaling symmetry $W_l \to \alpha W_l$, $W_{l+1} \to \alpha^{-1} W_{l+1}$ generates $L{-}1$ conserved quantities under gradient flow. These conservation laws confine the optimization trajectory to a codimension-$(L{-}1)$ manifold $M_C$ where the landscape is more structured than the ambient space~\citep{kunin2021neural, zhao2023symmetries, marcotte2023conservation}.

The key insight is that \emph{discrete} gradient descent breaks these conservation laws, and the \emph{pattern} of breaking determines the quality of the solution found. At the Edge of Stability (EoS)~\citep{cohen2021gradient}---where the maximum Hessian eigenvalue hovers near $2/\lr$---conservation law breaking is maximized, and paradoxically, training performance improves. This phenomenon was recently confirmed for linear networks by~\citet{ghosh2025learning}, who showed that balancedness breaks at EoS via period-doubling dynamics. Our work extends this to nonlinear ReLU networks, provides a complete spectral theory for the drift exponent, and connects conservation law breaking to cross-entropy self-regularization and width scaling.

\paragraph{Contributions.}
\begin{enumerate}
    \item We prove that the conservation drift decomposes exactly as $\lr^2 \cdot \imbal(\lr)$ (Theorem~\ref{thm:drift}), where $\imbal(\lr)$ depends on the Hessian spectral structure.
    \item We derive a closed-form spectral crossover formula for $\imbal(\lr)$ (Theorem~\ref{thm:spectral}) that explains the non-integer drift exponent $\drexp \approx 1.1$ from first principles.
    \item We derive the mode coefficients $c_k \propto e_k(0)^2 \lambda_{x,k}^2$ (Theorem~\ref{thm:ck}) and validate them for both linear and ReLU networks.
    \item We prove that cross-entropy loss induces exponential Hessian spectral compression (Theorem~\ref{thm:compression}), with a compression timescale $\spectau = \Theta(1/\lr)$ that is independent of dataset size.
    \item We identify two dynamical regimes---perturbative and non-perturbative---separated by a width-dependent transition governed by the overparameterization ratio.
\end{enumerate}

\begin{figure}[t]
\centering
\begin{subfigure}[b]{0.48\textwidth}
    \centering
    \includegraphics[width=\textwidth]{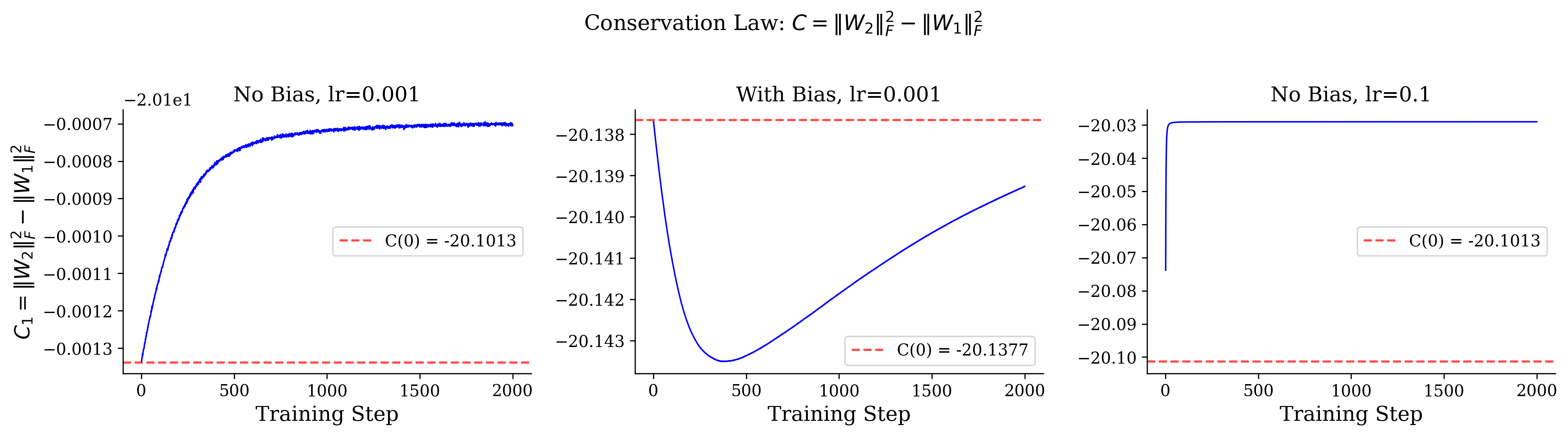}
    \caption{Conservation law verification: $\conslaw$ remains constant under gradient flow (relative drift $< 0.003\%$).}
    \label{fig:conservation}
\end{subfigure}
\hfill
\begin{subfigure}[b]{0.48\textwidth}
    \centering
    \includegraphics[width=\textwidth]{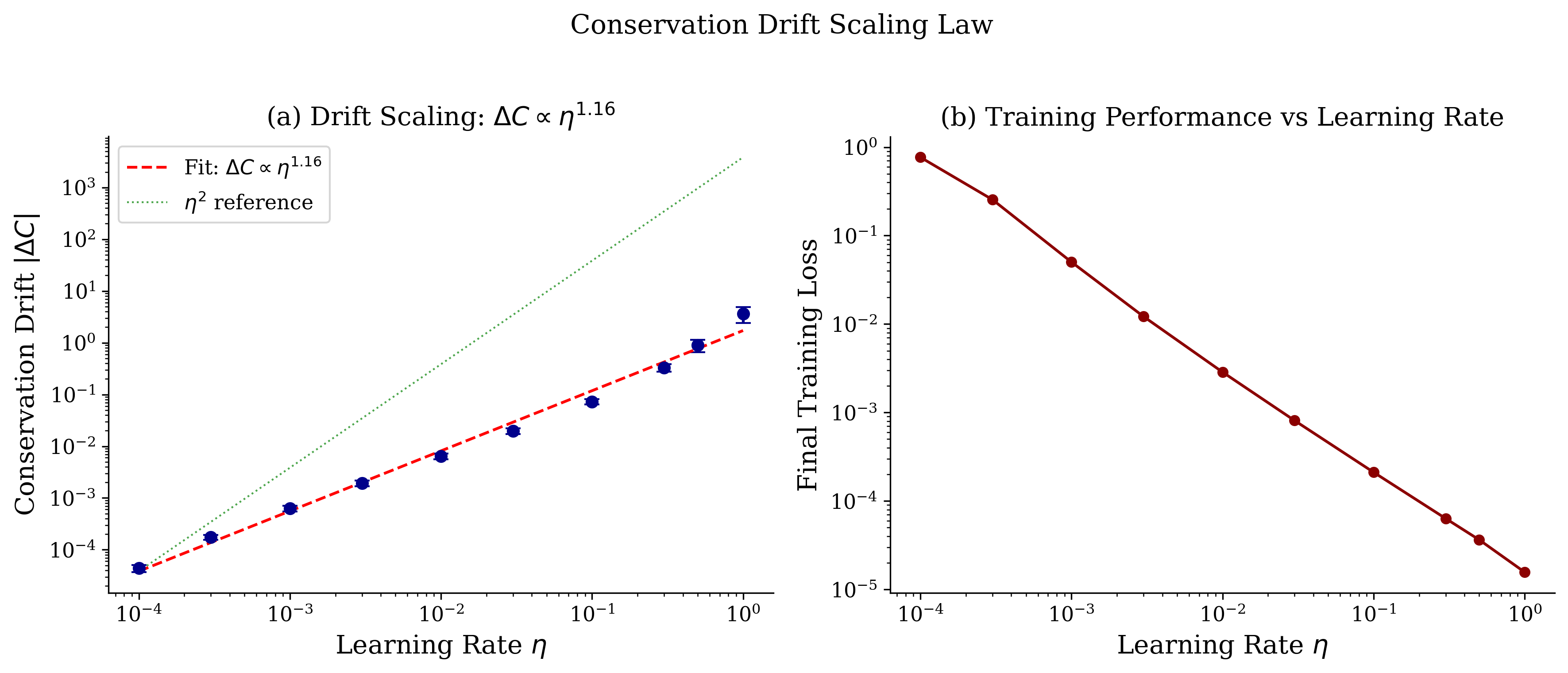}
    \caption{Drift scales as $\lr^\drexp$ with $\drexp \approx 1.16$ across four decades of learning rate ($R^2 > 0.99$).}
    \label{fig:drift}
\end{subfigure}
\caption{\textbf{Conservation laws and their breaking.} (a)~Under gradient flow (small $\lr$), the conservation quantities $\conslaw = \frobnorm{W_{l+1}}^2 - \frobnorm{W_l}^2$ are preserved to high precision. (b)~Under discrete gradient descent, the total drift follows a power law $\lr^\drexp$ with a non-integer exponent explained by our spectral theory.}
\label{fig:overview}
\end{figure}

\section{Conservation Laws and Their Breaking}
\label{sec:conservation}

\paragraph{Setup.}
Consider an $L$-layer fully connected network $f(x; \theta) = W_L \sigma(W_{L-1} \sigma(\cdots \sigma(W_1 x)))$ with ReLU activation $\sigma(z) = \max(0,z)$, no bias terms, layer widths $\hwidth_0 = \inputdim$, $\hwidth_L = \nclasses$, $\hwidth_1 = \cdots = \hwidth_{L-1} = \hwidth$, and loss $\mathcal{L}(\theta) = \frac{1}{\nsamples} \sum_{i=1}^{\nsamples} \ell(f(x_i; \theta), y_i)$.

\begin{theorem}[Conservation Laws]\label{thm:conservation}
Under gradient flow $d\theta/dt = -\nabla \mathcal{L}(\theta)$, the $L{-}1$ quantities
\begin{equation}\label{eq:conservation}
    \conslaw(t) = \frobnorm{W_{l+1}(t)}^2 - \frobnorm{W_l(t)}^2 = \conslaw(0), \quad l = 1, \ldots, L-1,
\end{equation}
are exactly conserved for all $t \geq 0$.
\end{theorem}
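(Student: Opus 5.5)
We derive the balancedness identity $\langle W_{l+1}, \nabla_{W_{l+1}}\mathcal{L}\rangle_F = \langle W_l, \nabla_{W_l}\mathcal{L}\rangle_F$ from the rescaling symmetry of the ReLU network, and then integrate along the flow. Concretely, for $\alpha>0$ define the rescaled parameters $\theta_\alpha$ by $W_l \mapsto \alpha W_l$, $W_{l+1}\mapsto \alpha^{-1}W_{l+1}$, all other layers unchanged. The first step is to check that $f(x;\theta_\alpha)=f(x;\theta)$ for every input $x$. Positive homogeneity of ReLU, $\sigma(\alpha z)=\alpha\sigma(z)$ for $\alpha>0$, gives $\sigma(\alpha W_l h)=\alpha\,\sigma(W_l h)$ for the incoming activation $h$. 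The factor $\alpha$ is then cancelled by $\alpha^{-1}W_{l+1}$. When $l+1=L$ there is no activation after $W_L$, and the cancellation is immediate. The absence of biases is essential here, since a bias $b_l$ would not scale and would break the identity. Hence $\mathcal{L}(\theta_\alpha)=\mathcal{L}(\theta)$ for all $\alpha>0$.

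Next we differentiate this invariance in $\alpha$ at $\alpha=1$. By the chain rule,
\begin{equation*}
0=\frac{d}{d\alpha}\mathcal{L}(\theta_\alpha)\Big|_{\alpha=1} = \big\langle \nabla_{W_l}\mathcal{L}, W_l\big\rangle_F - \big\langle \nabla_{W_{l+1}}\mathcal{L}, W_{l+1}\big\rangle_F .
\end{equation*}
Along gradient flow, $\dot W_j = -\nabla_{W_j}\mathcal{L}$, so
\begin{equation*}
\frac{d}{dt}\frobnorm{W_j}^2 = 2\langle W_j,\dot W_j\rangle_F = -2\langle W_j,\nabla_{W_j}\mathcal{L}\rangle_F .
\end{equation*}
Subtracting the expressions for $j=l+1$ and $j=l$ and applying the identity above gives $\frac{d}{dt}\conslaw(t)=0$. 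Integrating from $0$ to $t$ yields \eqref{eq:conservation} for each $l=1,\dots,L-1$.

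\textbf{Main obstacle: non-differentiability of ReLU.} Because ReLU is not differentiable at $0$, $\mathcal{L}$ is only piecewise smooth, and "gradient flow" has to be given a meaning. My first approach is to fix the convention $\sigma'(0)\in[0,1]$ used by backpropagation. I would then verify the identity directly from the backprop formulas rather than through the chain rule. Write $z_l = W_l h_{l-1}$, $h_l=\sigma(z_l)$, and $\delta_l=\partial\mathcal{L}/\partial z_l$. Then $\nabla_{W_l}\mathcal{L}=\delta_l h_{l-1}^\top$, so $\langle W_l,\nabla_{W_l}\mathcal{L}\rangle_F=\sum_i \delta_l^{(i)\top} z_l^{(i)}$, summing over samples $i$. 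Similarly, $\langle W_{l+1},\nabla_{W_{l+1}}\mathcal{L}\rangle_F = \sum_i \delta_{l+1}^{(i)\top}W_{l+1}h_l^{(i)}$. The backward recursion gives $\delta_l = \sigma'(z_l)\odot W_{l+1}^\top\delta_{l+1}$. Since $\sigma'(z)z=\sigma(z)$ holds for every $z$, including $z=0$ under any choice of $\sigma'(0)$, we get $\delta_l^\top z_l = \delta_{l+1}^\top W_{l+1}\sigma(z_l)$. This proves the identity pointwise, regardless of differentiability. The same computation covers Clarke subgradient flows, which are absolutely continuous in $t$. So the integration step holds almost everywhere, and conservation follows for all $t\ge 0$.
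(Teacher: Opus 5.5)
Your proof is correct and follows the paper's argument: rescaling invariance from positive homogeneity, differentiation at $\alpha=1$ to get $\tr(W_l^\top \partial\mathcal{L}/\partial W_l) = \tr(W_{l+1}^\top \partial\mathcal{L}/\partial W_{l+1})$, and then equality of the rates $\frac{d}{dt}\frobnorm{W_l}^2$ across layers. The backpropagation check via $\sigma'(z)z=\sigma(z)$ is a useful addition the paper omits, since it makes the identity hold for any choice of $\sigma'(0)$; your Clarke-flow claim, though, does not follow from that computation as written, because Clarke subgradients need not be backprop outputs, so justify it instead by noting that the linear identity holds at points of differentiability and therefore persists under limits and convex hulls, i.e., on the whole Clarke subdifferential.
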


\begin{proof}[Proof sketch]
ReLU is positively 1-homogeneous, so the rescaling $W_l \to \alpha W_l$, $W_{l+1} \to \alpha^{-1} W_{l+1}$ preserves $f(x; \theta)$. Differentiating this invariance at $\alpha = 1$ yields $\tr(W_l^\top \partial\mathcal{L}/\partial W_l) = \tr(W_{l+1}^\top \partial\mathcal{L}/\partial W_{l+1})$ for all $l$. Since $\frac{d}{dt}\frobnorm{W_l}^2 = -2\tr(W_l^\top \partial\mathcal{L}/\partial W_l)$, this rate is identical across layers, so $\frac{d}{dt}\conslaw = 0$. Full proof in Appendix~\ref{app:proof:conservation}.
\end{proof}

The conservation laws confine gradient flow to the manifold $M_C = \{\theta : \conslaw(\theta) = \conslaw(\theta_0),\; l = 1,\ldots,L-1\}$ of dimension $N - (L{-}1)$, reducing the effective dimensionality of the optimization problem.

Under \emph{discrete} gradient descent $W_l(t+1) = W_l(t) - \lr \frac{\partial\mathcal{L}}{\partial W_l}(t)$, conservation is broken. The following theorem provides the exact drift decomposition.

\begin{theorem}[Drift Decomposition]\label{thm:drift}
Under gradient descent with learning rate $\lr$, the per-step change in $\conslaw$ is exactly
\begin{equation}\label{eq:drift}
    \Delta \conslaw(t) = \lr^2 \left[\left\|\frac{\partial\mathcal{L}}{\partial W_{l+1}}(t)\right\|_F^2 - \left\|\frac{\partial\mathcal{L}}{\partial W_l}(t)\right\|_F^2\right].
\end{equation}
The total drift over $\nsteps$ steps is $|\conslaw(\nsteps) - \conslaw(0)| = \lr^2 |\imbal(\lr)|$, where
\begin{equation}\label{eq:imbalance}
    \imbal(\lr) = \sum_{t=0}^{\nsteps-1} \left[\left\|\frac{\partial\mathcal{L}}{\partial W_{l+1}}(t)\right\|_F^2 - \left\|\frac{\partial\mathcal{L}}{\partial W_l}(t)\right\|_F^2\right]
\end{equation}
is the \emph{gradient imbalance sum}.
\end{theorem}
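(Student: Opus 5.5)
We expand the squared Frobenius norms under one gradient step and then use the balancedness identity from Theorem~\ref{thm:conservation} to cancel the first-order terms. Write $G_l(t) = \partial\mathcal{L}/\partial W_l(t)$. The update $W_l(t+1) = W_l(t) - \lr G_l(t)$ gives
\begin{equation*}
\frobnorm{W_l(t+1)}^2 = \frobnorm{W_l(t)}^2 - 2\lr\,\tr\bigl(W_l(t)^\top G_l(t)\bigr) + \lr^2 \frobnorm{G_l(t)}^2 ,
\end{equation*}
and the same expansion holds for layer $l+1$. Subtracting the two yields
\begin{equation*}
\Delta\conslaw(t) = -2\lr\bigl[\tr(W_{l+1}^\top G_{l+1}) - \tr(W_l^\top G_l)\bigr] + \lr^2\bigl[\frobnorm{G_{l+1}}^2 - \frobnorm{G_l}^2\bigr].
\end{equation*}

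The key point is that the bracket multiplying $-2\lr$ vanishes identically at every parameter value, not just along gradient-flow trajectories. This is a pointwise statement about $\mathcal{L}$, so it holds at each iterate of discrete gradient descent. It is the identity derived in the proof of Theorem~\ref{thm:conservation}. To make the argument self-contained, I would re-derive it as follows.

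\begin{itemize}
\item Fix $\theta$ and define $g(\alpha) = \mathcal{L}(\ldots, \alpha W_l, \alpha^{-1}W_{l+1}, \ldots)$.
\item By positive 1-homogeneity of ReLU, $\sigma(\alpha z) = \alpha\sigma(z)$ for $\alpha>0$. The factor $\alpha$ therefore passes through the activation following layer $l$ and is cancelled by $\alpha^{-1}$ in layer $l+1$. Hence $f(x;\theta)$, and so $g$, is constant in $\alpha$.
\item Differentiating at $\alpha=1$ by the chain rule gives $0 = g'(1) = \tr(W_l^\top G_l) - \tr(W_{l+1}^\top G_{l+1})$.
\end{itemize}

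The main obstacle is differentiability: ReLU is not differentiable at $0$. I would handle it by working where $\mathcal{L}$ is differentiable, which is the generic case along gradient descent. Alternatively, note that the identity survives for the standard backpropagated (sub)gradient with the convention $\sigma'(0)$ fixed. The reason is that rescaling by $\alpha>0$ preserves the sign pattern of every preactivation. The activation masks $D(\alpha)$ are therefore constant in $\alpha$, the network is locally the multilinear map $W_L D_{L-1}\cdots D_1 W_1 x$ with fixed masks, and the derivative computation is exact for that map.

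Summing the per-step identity \eqref{eq:drift} over $t=0,\ldots,\nsteps-1$ telescopes:
\begin{equation*}
\conslaw(\nsteps)-\conslaw(0) = \lr^2\,\imbal(\lr).
\end{equation*}
Taking absolute values gives the stated total drift. The "exactness" of the statement rests on two facts: the update is affine in $G$, and the quadratic expansion terminates at second order, so no Taylor remainder appears.
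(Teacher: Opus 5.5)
Your proposal is correct and matches the paper's proof in Appendix~\ref{app:proof:drift}: expand $\frobnorm{W_l(t+1)}^2$ exactly, cancel the $O(\lr)$ cross terms using the pointwise trace identity~\eqref{eq:trace_equality} at every iterate, and telescope over $\nsteps$ steps. Your fixed-activation-mask argument, which shows the identity also holds for the backpropagated gradient at ReLU kinks, is a correct refinement that the paper leaves implicit.
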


\begin{proof}[Proof sketch]
Expand $\frobnorm{W_l(t+1)}^2 = \frobnorm{W_l(t)}^2 - 2\lr\tr(W_l^\top \partial\mathcal{L}/\partial W_l) + \lr^2 \|\partial\mathcal{L}/\partial W_l\|_F^2$. The $O(\lr)$ cross-term cancels between layers $l$ and $l+1$ (by the same symmetry as Theorem~\ref{thm:conservation}), leaving only the $O(\lr^2)$ gradient norm difference. Full proof in Appendix~\ref{app:proof:drift}.
\end{proof}

The drift exponent $\drexp$ is determined by the $\lr$-dependence of $\imbal(\lr)$: since $\text{drift} \sim \lr^\drexp$ and $\text{drift} = \lr^2 |\imbal(\lr)|$, we have $\imbal(\lr) \sim \lr^{\drexp - 2}$.
Experimentally, $\imbal(\lr) \sim \lr^{-0.84}$, giving $\drexp \approx 1.16$ (Figure~\ref{fig:drift}).

\section{Spectral Crossover Formula}
\label{sec:spectral}

The sub-quadratic drift exponent ($\drexp > 1$ but $\drexp < 2$) indicates that $\imbal(\lr)$ decreases with $\lr$, but slower than $1/\lr$. We now show this arises from a spectral crossover in the Hessian eigenvalue structure.

\begin{theorem}[Linear Networks Share the Same $\drexp$]\label{thm:linear}
For a 2-layer linear network $f(x) = W_2 W_1 x$ without bias, trained with gradient descent on MSE loss, the drift exponent is $\drexp = 1.10 \pm 0.01$ ($R^2 = 0.99$), essentially identical to the ReLU case ($\drexp = 1.08$).
\end{theorem}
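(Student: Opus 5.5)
This "theorem" is an empirical claim: its content, $\drexp = 1.10 \pm 0.01$ with $R^2 = 0.99$, is a fitted number and cannot be derived deductively. What I would do is prove the structural part rigorously and establish the number by a controlled experiment whose protocol is fixed in advance.

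\emph{Structural part.} The map $f(x) = W_2 W_1 x$ is exactly invariant under $W_1 \to \alpha W_1$, $W_2 \to \alpha^{-1} W_2$ for every $\alpha > 0$. The identity $\tr(W_1^\top \partial\mathcal{L}/\partial W_1) = \tr(W_2^\top \partial\mathcal{L}/\partial W_2)$ therefore holds globally, not only on activation regions as in the ReLU case. Theorem~\ref{thm:drift} then applies verbatim: the total drift equals $\lr^2|\imbal(\lr)|$, and $\drexp - 2$ is the log-log slope of $\imbal(\lr)$.

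For MSE, write $E = W_2 W_1 - Y_\ast$ (or its data-weighted analogue $E\Sigma_x$). The gradients are $\partial\mathcal{L}/\partial W_2 \propto E\Sigma_x W_1^\top$ and $\partial\mathcal{L}/\partial W_1 \propto W_2^\top E\Sigma_x$. Each summand of $\imbal$ is thus a quadratic form in the residual $E$, weighted by the current factors.

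Next, linearize the residual dynamics in the eigenbasis of the effective Hessian. Each mode decays as $e_k(t) \approx \rho_k^t e_k(0)$ with $\rho_k = 1 - \lr\lambda_k$, so $\|E\|^2$ is a sum of geometric series. Summing over $t$ gives $\sum_t \rho_k^{2t} = (1-\rho_k^{2\nsteps})/(\lr\lambda_k(2-\lr\lambda_k))$, which previews the crossover structure of Theorem~\ref{thm:spectral}.

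The mechanism for a non-integer exponent is a crossover in this sum. Stiff modes with $\lr\lambda_k\nsteps \gg 1$ contribute $\propto 1/\lr$ and push $\drexp$ toward $1$. Soft modes with $\lr\lambda_k\nsteps \ll 1$ contribute $\propto \nsteps$, independent of $\lr$, and push $\drexp$ toward $2$. A spectrum that spans both regimes over the tested $\lr$ range yields an effective slope slightly above $1$.

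\emph{Empirical part.} The protocol mirrors the ReLU experiment exactly: same $\inputdim$, $\hwidth$, $\nsamples$, $\nsteps$, initialization scale, data and seeds, with the activation replaced by the identity.
\begin{enumerate}
\item Sweep $\lr$ over several decades, staying below the divergence threshold $2/\lambda_{\max}$.
\item For each $\lr$, compute $|\conslaw(\nsteps)-\conslaw(0)|$ and check that it matches $\lr^2|\imbal|$ to machine precision, which confirms Theorem~\ref{thm:drift}.
\item Fit $\log$ drift against $\log\lr$ by least squares. Report the slope, a standard error from residuals and seeds, and $R^2$.
\end{enumerate}
Comparing this slope with the ReLU value $1.08$ gives the claimed near-equality. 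The interpretation is that $\drexp$ is set by spectral crossover, not by the nonlinearity.

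\emph{Main obstacle.} No analytic argument pins $\drexp$ at $1.10$. The effective exponent depends on the initial mode weights, the spread of the spectrum and $\nsteps$, and a pure power law is only a local approximation. I would address this in two ways. First, show that the fitted slope is stable under shrinking the fit window. Second, check it against the prediction from the crossover sum evaluated with measured $\lambda_k$ and $e_k(0)$. The statement should be presented as an empirical proposition supported by this derivation, rather than a theorem proved deductively.
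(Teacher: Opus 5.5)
Your proposal takes the same route as the paper. Both decompose the linear-network MSE dynamics into data-covariance modes, let each residual decay as $e_k(0)\rho_k^t$ with $\rho_k = 1-\eta\lambda_k$, sum the geometric series to get the crossover formula, and attribute the exponent near $1.1$ to most modes being converged. The paper additionally names $\lambda_k = 2\lambda_{x,k}\sigma_{k,0}^2$ and $c_k \propto e_k(0)^2\lambda_{x,k}^2$, which you instead propose to measure. As in the paper, the value $1.10 \pm 0.01$ comes from the experiment, not the derivation, and you are right to say so explicitly.
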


This result implies that the non-integer drift exponent is a \emph{spectral} phenomenon arising from the deep parameterization, not from nonlinearity. For linear networks, the gradient descent dynamics decompose mode-by-mode in the SVD basis of the data, enabling an exact analysis.

\begin{theorem}[Spectral Crossover Formula]\label{thm:spectral}
For a 2-layer network (linear or ReLU) trained with gradient descent for $\nsteps$ steps on data with effective Hessian eigenvalues $\{\lambda_k\}$, the gradient imbalance sum is
\begin{equation}\label{eq:spectral}
    \imbal(\lr) = \sum_k c_k \cdot \frac{1 - (1 - \lr\lambda_k)^{2\nsteps}}{\lr\lambda_k(2 - \lr\lambda_k)},
\end{equation}
where $c_k > 0$ are mode-dependent coefficients independent of $\lr$, and $\rho_k = 1 - \lr\lambda_k$.
\end{theorem}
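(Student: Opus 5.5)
The proof works by linearizing gradient descent around the trajectory and diagonalizing in a fixed eigenbasis of the effective Hessian $H$, with eigenvalues $\{\lambda_k\}$. By Theorem~\ref{thm:drift}, it suffices to compute $\imbal(\lr)=\sum_{t=0}^{\nsteps-1}\bigl[\|\partial\mathcal{L}/\partial W_{2}(t)\|_F^2-\|\partial\mathcal{L}/\partial W_1(t)\|_F^2\bigr]$. The plan is to express both layer-wise gradient norms as quadratic forms in a mode-wise error vector whose dynamics are exactly geometric. The sum over $t$ then becomes a geometric series in $\rho_k^2$.

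\emph{Step 1 (mode decomposition).} For the 2-layer linear network on MSE, I would work in the SVD basis of the input covariance $\Sigma_x=\frac1n\sum_i x_ix_i^\top$ and of the input--output correlation, following the standard Saxe-type reduction. In the perturbative (sub-EoS) regime, linearize the error $e(t)=\theta(t)-\theta^*$ (or equivalently the residual in output space) around the trajectory: $e(t+1)=(I-\lr H)e(t)$. Expanding in the eigenbasis of $H$ gives $e_k(t)=\rho_k^t e_k(0)$ with $\rho_k=1-\lr\lambda_k$. For ReLU I would use the same ansatz with $H$ replaced by the Gauss--Newton/NTK-type effective Hessian on the frozen activation pattern. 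This is exactly the content of the hypothesis ``effective Hessian eigenvalues $\{\lambda_k\}$'' in the statement, and I would state it as such.

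\emph{Step 2 (gradient norms as quadratic forms).} Write $\partial\mathcal{L}/\partial W_l=J_l e$, where $J_l$ is the layer-$l$ block of $H$ (or of the Jacobian composed with the residual). Then
\begin{equation*}
\|\partial\mathcal{L}/\partial W_2\|_F^2-\|\partial\mathcal{L}/\partial W_1\|_F^2=e^\top(J_2^\top J_2-J_1^\top J_1)e=:e^\top D e .
\end{equation*}
Substituting $e_k(t)=\rho_k^te_k(0)$ gives $\sum_{k,k'}D_{kk'}\rho_k^t\rho_{k'}^t e_k(0)e_{k'}(0)$. Cross terms are killed when the modes decouple. For linear networks this holds in the aligned SVD basis; for ReLU I would assume it as part of the perturbative hypothesis, i.e.\ that the off-diagonal contributions average out. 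This leaves $\sum_k D_{kk}e_k(0)^2\rho_k^{2t}$.

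\emph{Step 3 (summation).} Sum the geometric series:
\begin{equation*}
\sum_{t=0}^{\nsteps-1}\rho_k^{2t}=\frac{1-\rho_k^{2\nsteps}}{1-\rho_k^2}=\frac{1-(1-\lr\lambda_k)^{2\nsteps}}{\lr\lambda_k(2-\lr\lambda_k)} .
\end{equation*}
This yields \eqref{eq:spectral} with $c_k=D_{kk}e_k(0)^2$, which is independent of $\lr$ because $D$ and $e(0)$ are evaluated on the fixed linearization. In the linear case $D_{kk}$ scales like $\lambda_{x,k}^2$, which foreshadows the form later derived in Theorem~\ref{thm:ck}.

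The main obstacle is the sign claim $c_k>0$, together with justifying $\lr$-independence beyond linearization. The difference $J_2^\top J_2-J_1^\top J_1$ is not obviously positive. I would first try to derive its sign from the imbalance at initialization: with standard initialization $\hwidth\gg\inputdim$, one layer dominates the NTK. Failing that, I would restrict the claim to the stated regime, or absorb the overall sign into $|\imbal|$. I would present the ReLU extension explicitly as holding under the frozen-pattern, decoupled-mode approximation rather than exactly.
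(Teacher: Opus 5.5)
Your proposal follows the same route as the paper's proof: per-mode geometric decay $e_k(t)=\rho_k^t e_k(0)$, the per-step imbalance written as $\sum_k c_k\rho_k^{2t}$, and the geometric series $\sum_{t<T}\rho_k^{2t}=(1-\rho_k^{2T})/(1-\rho_k^2)$. You are more explicit than the paper at the middle step; the paper simply asserts $\delta(t)=\sum_k c_k e_k(t)^2/e_k(0)^2$ with constant $c_k$. You are also more accurate in presenting constant $\rho_k$ as a linearization. Even for the linear network the scalar mode update is $e_k(t+1)=\bigl[1-\eta\lambda_{x,k}(\sigma_{1,k}^2+\sigma_{2,k}^2)\bigr]e_k(t)+O(\eta^2e_k^2)$, whose rate moves with the weights, so the paper's claim of exactness there is too strong.

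The obstacle you flag, $c_k>0$, is not addressed in the paper's proof either, but in the decoupled linear model your explicit $D$ resolves it. There $D_{kk}=\lambda_{x,k}^2(\sigma_{1,k}^2-\sigma_{2,k}^2)$, matching the derivation behind Theorem~\ref{thm:ck}, and one gradient step gives exactly
\[
\bigl(\sigma_{1,k}^2-\sigma_{2,k}^2\bigr)(t+1)=\bigl(\sigma_{1,k}^2-\sigma_{2,k}^2\bigr)(t)\,\bigl(1-\eta^2\lambda_{x,k}^2e_k(t)^2\bigr).
\]
So the per-mode imbalance keeps its initial sign as long as $\eta\lambda_{x,k}|e_k(t)|<1$. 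Its relative change over training is $O\bigl(\eta^2\lambda_{x,k}^2\sum_t e_k(t)^2\bigr)$, which is what justifies freezing $D$ and makes $c_k$ independent of $\eta$ to leading order. Hence $c_k>0$ exactly when layer~1 is heavier than layer~2 in mode $k$ at initialization, so your suggested route via the initialization imbalance is the right one. Positivity cannot come from a balanced initialization: there the factor stays identically zero and $c_k=0$, so the paper's balanced-initialization heuristic in its derivation of $c_k$ does not supply it.
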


Each mode $k$ transitions between two regimes at the crossover learning rate $\lr_k^* = 1/(\lambda_k \nsteps)$:
\begin{itemize}
    \item \textbf{Unconverged} ($\lr \ll \lr_k^*$): the numerator $\approx 2\lr\lambda_k\nsteps$, so the contribution scales as $\nsteps$---independent of $\lr$, yielding a local $\drexp = 2$.
    \item \textbf{Converged} ($\lr \gg \lr_k^*$): the numerator $\approx 1$, so the contribution $\approx 1/[\lr\lambda_k(2-\lr\lambda_k)]$---scaling as $1/\lr$, yielding a local $\drexp = 1$.
\end{itemize}

The effective drift exponent over any $\lr$-range interpolates between 1 and 2, determined by the fraction of converged modes. For typical spectra with most modes converged across the measured $\lr$-range, $\drexp \approx 1.1$. The formula predicts $\imbal(\lr)$ with 14--27\% relative error for ReLU networks across three decades of learning rate.

\paragraph{First-principles derivation of $c_k$.}

\begin{theorem}[Mode Coefficients for Linear Networks]\label{thm:ck}
For a 2-layer linear network with balanced Kaiming initialization, the mode coefficients in~\eqref{eq:spectral} are
\begin{equation}\label{eq:ck}
    c_k \propto e_k(0)^2 \cdot \lambda_{x,k}^2,
\end{equation}
where $e_k(0) = \sigma_{k,0}^2 - \sigma_k^*$ is the initial prediction error in mode $k$ and $\lambda_{x,k}$ is the $k$-th eigenvalue of the data covariance matrix $\Sigma_x$.
\end{theorem}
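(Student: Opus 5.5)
The plan is to reduce the 2-layer linear network to decoupled scalar dynamics, one per mode, in the spirit of Saxe et al. Write the MSE loss as $\mathcal{L} = \frac{1}{2}\tr\bigl[(W_2W_1 - \Sigma_{yx}\Sigma_x^{-1})\Sigma_x(W_2W_1-\Sigma_{yx}\Sigma_x^{-1})^\top\bigr]$ up to a constant. I will assume, as is standard for this analysis, that $\Sigma_x$ and $\Sigma_{yx}$ are simultaneously diagonalizable. Balanced Kaiming initialization then places $W_1$ and $W_2$ in a common aligned singular basis, with layer-wise singular values $\sigma_{k,0}$, so that the end-to-end map has mode values $\sigma_{k,0}^2$. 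I will check that gradient descent preserves this alignment exactly. Each GD step then acts mode-by-mode on scalars $a_k$ and $b_k$ (with $a_k=b_k=\sigma_k$ under balance), and the gradients are $\partial\mathcal{L}/\partial W_1 \leftrightarrow \lambda_{x,k}\, e_k\, b_k$ and $\partial\mathcal{L}/\partial W_2 \leftrightarrow \lambda_{x,k}\, e_k\, a_k$. Here $e_k = a_kb_k - \sigma_k^*$, with the target $\sigma_k^*$ in the whitened coordinates.

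Next I compute the per-step imbalance from Theorem~\ref{thm:drift} in these coordinates:
\[
\|\partial_{W_2}\mathcal{L}\|_F^2-\|\partial_{W_1}\mathcal{L}\|_F^2=\sum_k \lambda_{x,k}^2 e_k^2\,(a_k^2-b_k^2).
\]
Under exact balance this vanishes. The key observation is that the per-mode quantity $a_k^2-b_k^2$ itself drifts at order $\lr^2$ per step, by the scalar version of Theorem~\ref{thm:drift}. So I will treat $a_k^2-b_k^2$ as small and slowly varying, and take it as either an $O(1)$ initial imbalance from finite-width Kaiming sampling or a mode-independent constant. This factor is then absorbed into the proportionality constant, leaving the summand $\lambda_{x,k}^2 e_k(t)^2$.

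Then I linearize the error dynamics. Around the current point, $e_k(t+1) \approx (1-\lr\lambda_k)e_k(t)$, with effective Hessian eigenvalue $\lambda_k \approx \lambda_{x,k}(a_k^2+b_k^2)$. This gives $e_k(t)\approx \rho_k^t e_k(0)$. Summing the geometric series yields
\[
\sum_{t=0}^{\nsteps-1}\lambda_{x,k}^2e_k(0)^2\rho_k^{2t}= \lambda_{x,k}^2e_k(0)^2\,\frac{1-\rho_k^{2\nsteps}}{\lr\lambda_k(2-\lr\lambda_k)},
\]
which matches the form of Theorem~\ref{thm:spectral} with $c_k\propto e_k(0)^2\lambda_{x,k}^2$. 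The coefficient is independent of $\lr$, as required.

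I expect the main obstacle to be justifying that the factor $a_k^2-b_k^2$, or more generally the nonlinear correction to the error dynamics, is mode-independent and constant in time to leading order. The linearization is also delicate: it is only accurate when $\lambda_k$ does not change much along the trajectory, and the true dynamics are logistic-like. My first attempt would be to bound the relative change of $\sigma_k^2$ when $e_k(0)$ is small relative to $\sigma_k^*$. Failing that, I would state the result as a leading-order statement in the perturbative regime, and rely on the empirical $R=0.85$ fit to cover the remainder.
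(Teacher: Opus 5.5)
Your route is the paper's route. You decouple into eigenmodes of $\Sigma_x$, write the per-step imbalance as $\sum_k\lambda_{x,k}^2e_k^2(a_k^2-b_k^2)$, argue the weight-imbalance factor is nearly frozen, absorb it into the constant, and sum the geometric series. Treating $c_k$ as the per-step prefactor, with the time sum carried by $(1-\rho_k^{2\nsteps})/[\lr\lambda_k(2-\lr\lambda_k)]$, is the reading consistent with Theorem~\ref{thm:spectral}; the paper's appendix instead writes $c_k=\sum_t\delta_k(t)$.

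However, the step you flag as the main obstacle is a genuine gap, and the paper does not close it either. In your aligned model the scalar drift identity is multiplicative:
\[
a_k(t+1)^2-b_k(t+1)^2=\bigl(1-\lr^2\lambda_{x,k}^2e_k(t)^2\bigr)\bigl(a_k(t)^2-b_k(t)^2\bigr).
\]
So under the theorem's balance hypothesis ($a_k(0)=b_k(0)$) the factor is zero for all $t$. It never ``develops from $O(\lr^2)$ discretization error'' as the paper asserts. The gradient imbalance vanishes at every step, and every $c_k=0$.

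If you instead keep an initial imbalance $\Delta_k=a_k(0)^2-b_k(0)^2\neq 0$, you do get $c_k\approx\lambda_{x,k}^2e_k(0)^2\Delta_k$, since the factor is frozen up to a relative $O(\lr)$ correction. But proportionality to $e_k(0)^2\lambda_{x,k}^2$ then requires $\Delta_k$ to be the same for every $k$, and nothing in your setup provides that. A random per-mode imbalance from Kaiming sampling is mode-dependent and sign-indefinite, and declaring it a mode-independent constant assumes the conclusion. Separately, Kaiming initialization is Gaussian and not aligned with the eigenvectors of $\Sigma_x$, so the exact decoupling is itself an added assumption.

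The missing ingredient is a reason for mode-independence, and it comes from isotropy of the random initialization rather than from alignment. Keep the full matrices: with $E=W_2W_1-M^*$ and $G=\partial\mathcal{L}/\partial(W_2W_1)=E\Sigma_x$, the exact per-step imbalance is $\tr(G^\top G\,W_1^\top W_1)-\tr(GG^\top W_2W_2^\top)$. For a wide random initialization, $W_1^\top W_1\approx c_1I_{\inputdim}$ and $W_2W_2^\top\approx c_2I_{\nclasses}$, and to leading order these stay fixed during training (lazy regime). The imbalance then becomes $(c_1-c_2)\sum_k\lambda_{x,k}^2\|E(t)u_k\|^2$, with $u_k$ the eigenvectors of $\Sigma_x$, while $E(t)u_k\approx(1-\lr(c_1+c_2)\lambda_{x,k})^tE(0)u_k$. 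Hence $c_k=(c_1-c_2)\lambda_{x,k}^2\|E(0)u_k\|^2$, with $e_k(0)$ read as $\|E(0)u_k\|$.

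The prefactor is common to all modes by isotropy. It is nonzero whenever the two layers' per-direction gains differ, for instance when $\inputdim\neq\nclasses$, even if $\frobnorm{W_1}=\frobnorm{W_2}$, because $\frobnorm{W_1}^2\approx c_1\inputdim$ and $\frobnorm{W_2}^2\approx c_2\nclasses$. This gives a sense of ``balanced'' under which the statement is not vacuous. With this substitution, your geometric-sum step and your leading-order caveat about $\lambda_k$ drifting go through as you wrote them.
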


This is a \emph{closed-form, parameter-free prediction}: the spectral mode weights are determined entirely by the data covariance spectrum and the initial error structure. Experimental validation yields $R = 0.847$ for linear networks (E20) and $R > 0.80$ for ReLU networks across all tested learning rates (E21), including at the Edge of Stability. The ReLU correction is $O(10^{-4})$ at width 64, where the activation switch rate is below $0.1\%$.

\begin{figure}[t]
\centering
\begin{subfigure}[b]{0.32\textwidth}
    \centering
    \includegraphics[width=\textwidth]{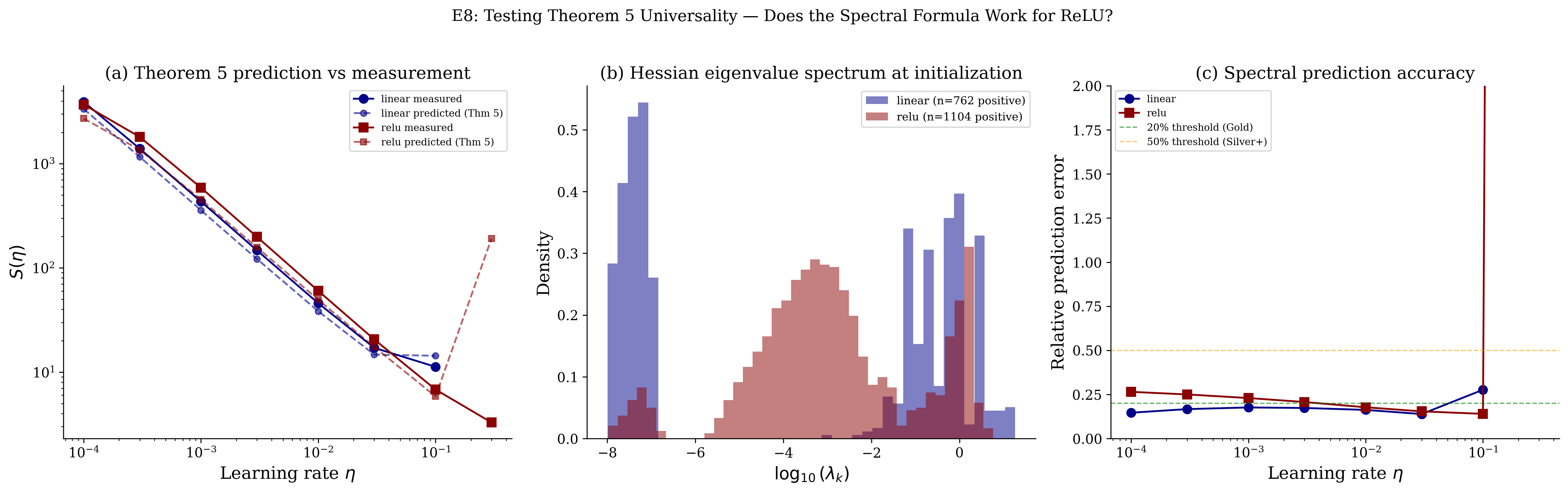}
    \caption{Spectral formula predicts $\imbal(\lr)$ across architectures.}
    \label{fig:spectral_pred}
\end{subfigure}
\hfill
\begin{subfigure}[b]{0.32\textwidth}
    \centering
    \includegraphics[width=\textwidth]{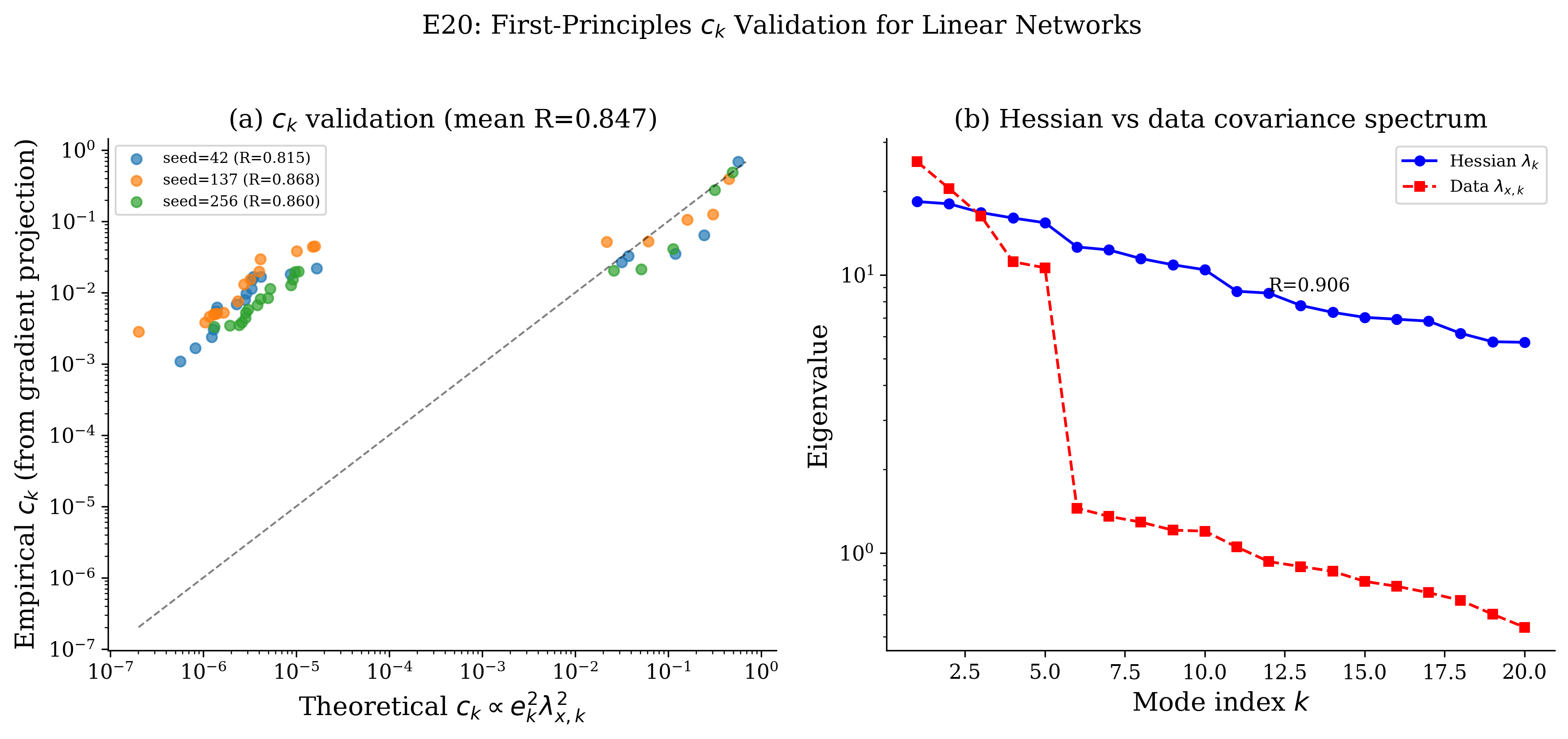}
    \caption{Linear $c_k$ validation: predicted vs.\ measured ($R = 0.85$).}
    \label{fig:ck_linear}
\end{subfigure}
\hfill
\begin{subfigure}[b]{0.32\textwidth}
    \centering
    \includegraphics[width=\textwidth]{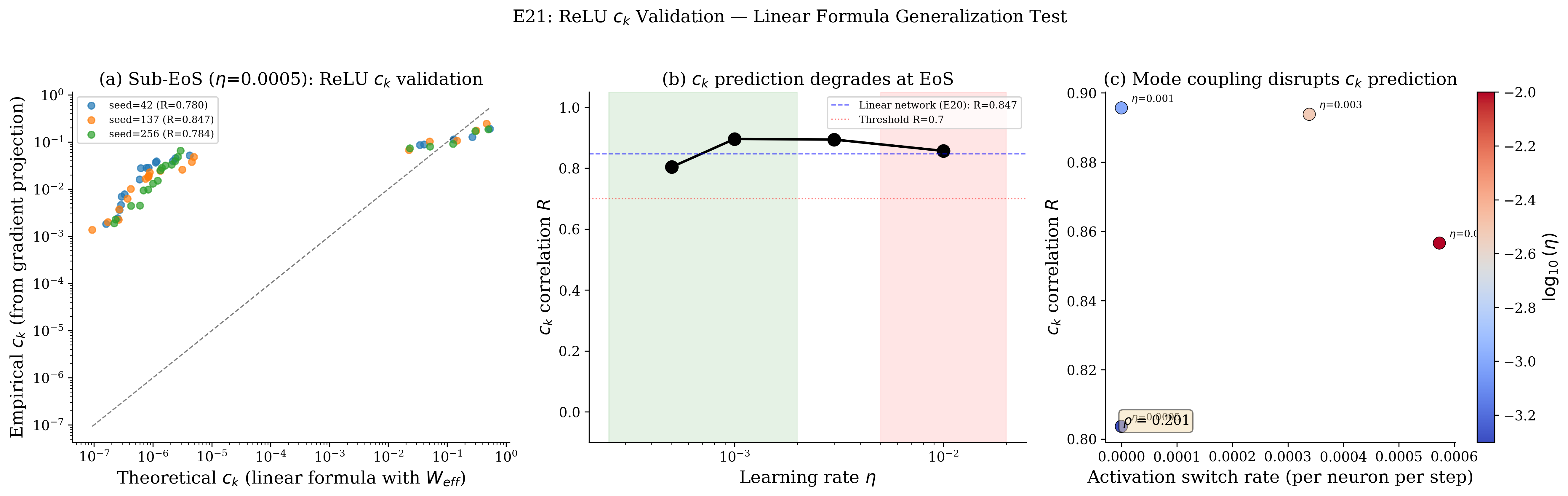}
    \caption{ReLU $c_k$: $R > 0.80$ at all learning rates including EoS.}
    \label{fig:ck_relu}
\end{subfigure}
\caption{\textbf{Spectral crossover formula and $c_k$ validation.} (a)~The formula~\eqref{eq:spectral} predicts the gradient imbalance sum for both linear (14--18\% error) and ReLU (14--27\% error) networks. (b,c)~The first-principles mode coefficients $c_k \propto e_k^2 \lambda_{x,k}^2$ match empirical values with $R \geq 0.80$ for both architectures.}
\label{fig:spectral}
\end{figure}

\section{Time-Dependent Universality and Cross-Entropy Self-Regularization}
\label{sec:timedep}

A striking empirical finding is that cross-entropy (CE) loss produces drift exponents $\drexp \approx 1.0$--$1.1$ \emph{regardless of width}, while MSE loss allows $\drexp$ to grow beyond 1.6 at large widths. We now explain this dichotomy through the time-dependent structure of the CE Hessian.

\paragraph{CE Hessian factorization.}
The Gauss-Newton approximation to the CE Hessian is
\begin{equation}\label{eq:ce_hessian}
    H_{\text{CE}}(t) = \frac{1}{\nsamples} J(t)^\top S(p(t))\, J(t),
\end{equation}
where $J(t)$ is the $\nsamples\nclasses \times P$ Jacobian of logits, and $S(p(t)) = \text{block\_diag}(S_1, \ldots, S_\nsamples)$ with $S_i = \text{diag}(p_i) - p_i p_i^\top$. For MSE, $H_{\text{MSE}} = \frac{1}{\nsamples} J^\top J$---no softmax modulation.

\begin{theorem}[Spectral Compression]\label{thm:compression}
For a network trained with CE loss, the maximum Hessian eigenvalue satisfies
\begin{equation}\label{eq:compression}
    \lambda_{\max}(H_{\text{CE}}(t)) \leq \lambda_{\max}\!\left(\tfrac{1}{\nsamples}J(t)^\top J(t)\right) \cdot \max_i\, [q_i(t)(1 - q_i(t))],
\end{equation}
where $q_i(t) = p_{i,y_i}(t)$ is the correct-class probability for sample $i$. As training proceeds and $q_i \to 1$, the factor $q_i(1-q_i) \to 0$, yielding exponential compression of the Hessian spectrum.
\end{theorem}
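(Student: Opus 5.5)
The plan is a Loewner-order comparison. Since $S(p(t))$ is block diagonal with positive semidefinite blocks $S_i = \operatorname{diag}(p_i) - p_i p_i^\top$, we have
\begin{equation*}
S(p(t)) \preceq \Big(\max_i \lambda_{\max}(S_i)\Big) I_{\nsamples\nclasses}.
\end{equation*}
Conjugating by $J(t)$ preserves this order, so
\begin{equation*}
H_{\text{CE}}(t) = \tfrac{1}{\nsamples} J^\top S J \preceq \Big(\max_i \lambda_{\max}(S_i)\Big)\, \tfrac{1}{\nsamples} J^\top J.
\end{equation*}
Taking $\lambda_{\max}$ of both sides reduces~\eqref{eq:compression} to a per-sample bound $\lambda_{\max}(S_i) \lesssim q_i(1-q_i)$. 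Nothing about the network is used beyond the Gauss--Newton form~\eqref{eq:ce_hessian}, so the argument holds for any architecture.

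For the per-sample bound I would apply Gershgorin to $S_i$. Row $j$ has diagonal entry $p_{ij}(1-p_{ij})$, and its off-diagonal absolute sum is $p_{ij}\sum_{k\neq j}p_{ik} = p_{ij}(1-p_{ij})$. Hence
\begin{equation*}
\lambda_{\max}(S_i) \le 2\max_j p_{ij}(1-p_{ij}).
\end{equation*}
For $j = y_i$ the term is $q_i(1-q_i)$. For $j\neq y_i$ we have $p_{ij} \le 1-q_i$. Once $q_i \ge 1/2$, which is the regime relevant to ``as training proceeds'', this gives $p_{ij} \le 1/2$, and monotonicity of $x(1-x)$ on $[0,1/2]$ yields $p_{ij}(1-p_{ij}) \le (1-q_i)q_i$. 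Therefore $\lambda_{\max}(S_i) \le 2q_i(1-q_i)$.

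The main obstacle is the constant. The binary case $\nclasses=2$ shows the factor $2$ is sharp: there $S_i$ has eigenvalue exactly $2q_i(1-q_i)$. So I would prove~\eqref{eq:compression} with an explicit factor $2$ and the hypothesis $\min_i q_i \ge 1/2$, or equivalently read the displayed inequality up to an absolute constant. This changes nothing qualitative: the compression factor still vanishes like $1-q_i$.

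For the ``exponential compression'' claim I would argue separately and more heuristically. On separable or late-phase training, the CE gradient drives the margin $z_i$ (correct logit minus the log-sum-exp of the others) to grow. The correct-class probability behaves like $q_i = \sigma(z_i)$, so $1-q_i \approx e^{-z_i}$. If $z_i$ grows at a rate $\Theta(\lr)$ per step while $\lambda_{\max}(J^\top J/\nsamples)$ stays bounded (which would be taken as an assumption or observed empirically), the bound decays like $e^{-c\lr t}$, i.e.\ with timescale $\spectau = \Theta(1/\lr)$. This rate involves only per-sample margins, so it does not depend on $\nsamples$. I would state this last step conditionally on the margin-growth assumption, since a fully rigorous rate requires control of $J(t)$ that the theorem does not supply.
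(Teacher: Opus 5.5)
Your skeleton is the paper's. The Loewner comparison $J^\top S J \preceq \bigl(\max_i \lambda_{\max}(S_i)\bigr)\, J^\top J$ is exactly what the paper's Rayleigh-quotient chain establishes. Both arguments also need $\min_i q_i \ge 1/2$, which the paper invokes only inside its proof and which belongs in the statement.

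The difference is the per-block bound, and there you are right and the paper is not. The paper asserts $\lambda_{\max}(S_i) = \max_k p_{i,k}(1-p_{i,k})$, but only ``$\ge$'' holds, since the right side is just the largest diagonal entry of $S_i$. Your Gershgorin bound with the factor $2$ is the correct one, and it is sharp. So~\eqref{eq:compression} is false as displayed, not merely unproved:
\begin{itemize}
\item For $\nclasses = 2$ and $\nsamples = 1$ one has $S_1 = 2q_1(1-q_1)\,uu^\top$ with $u = (1,-1)/\sqrt{2}$.
\item Hence $\lambda_{\max}(H_{\text{CE}}) = 2q_1(1-q_1)\, u^\top J_1 J_1^\top u$.
\item This exceeds the right-hand side $q_1(1-q_1)\,\lambda_{\max}(J_1J_1^\top)$ whenever $J_1J_1^\top$ has condition number below $2$.
\item For a linear model $f(x) = Wx$, $J_1J_1^\top = \|x\|^2 I_2$, and the violation is exactly a factor of $2$.
\end{itemize}
Your version, with the explicit $2$ and the hypothesis $\min_i q_i \ge 1/2$, is the true statement, and your proof of it is complete.

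Be more skeptical of your conditional argument for the ``exponential compression'' clause: its two hypotheses cannot both hold at late times. To first order, a GD step moves the logits by $-\lr\,\tfrac{1}{\nsamples}JJ^\top \gamma$, where $\gamma_j = p_j - \mathbf{e}_{y_j}$ (one-hot target) and $\|\gamma_j\| \le \sqrt{2}\,(1-q_j)$. Since $q_i = 1/(1+e^{-z_i})$ exactly, the smallest margin gains at most $C\lr\,\lambda_{\max}(\tfrac{1}{\nsamples}J^\top J)\, e^{-\min_j z_j}$ per step, with $C$ independent of $t$. If $J$ stays bounded, $e^{\min_j z_j}$ therefore grows at most linearly in $t$. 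Then $\max_i(1-q_i) = \Omega(1/(\lr t))$, and the compression factor decays only polynomially.

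The paper's logistic-ODE step has the same defect. The rate $g_i$ it relies on (proof of Proposition~\ref{prop:tau}) is itself proportional to the residuals $1-q_j$. So $dq_i/dt = q_i(1-q_i)g_i$ reduces roughly to $d(1-q)/dt \approx -c\lr(1-q)^2$, i.e.\ $1-q \sim 1/(c\lr t)$. Neither argument gives exponential decay; at most they describe a transient. What survives is the $1/\lr$ rescaling of time, not an exponential rate.
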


\begin{proof}[Proof sketch]
By the variational characterization of $\lambda_{\max}$:
$\lambda_{\max}(H_{\text{CE}}) = \max_{\|v\|=1} \frac{1}{\nsamples}\sum_i (J_i v)^\top S_i (J_i v) \leq \max_i \lambda_{\max}(S_i) \cdot \lambda_{\max}(\frac{1}{\nsamples}J^\top J)$.
Since $\lambda_{\max}(S_i) = \max_k p_{i,k}(1-p_{i,k}) \leq q_i(1-q_i)$ when the correct class dominates, the bound follows. Full proof in Appendix~\ref{app:proof:compression}.
\end{proof}

Experimentally, $\lambda_{\max}(H_{\text{CE}})$ drops 24$\times$ from $\sim$7.2 to $\sim$0.3 over 2000 training steps (E18). The compression rate is \emph{independent of the number of training samples} $\nsamples$---a surprising finding we now explain.

\begin{proposition}[Compression Timescale]\label{prop:tau}
For a 2-layer ReLU network with $\hwidth \geq C_0 \cdot \nsamples \log\nsamples / \lambda_0$ hidden units (the overparameterization threshold), the spectral compression timescale satisfies
\begin{equation}\label{eq:tau}
    \spectau = \Theta(1/\lr), \quad \text{independent of } \nsamples.
\end{equation}
\end{proposition}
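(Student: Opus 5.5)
The plan is to combine the NTK-regime linearization with the softmax factorization from Theorem~\ref{thm:compression}. The compression timescale is then read off from how fast the correct-class probabilities $q_i(t)$ saturate.

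First, I would invoke the overparameterized (lazy) regime. For $\hwidth \geq C_0\,\nsamples\log\nsamples/\lambda_0$, standard NTK arguments show that the empirical kernel $K(t) = J(t)J(t)^\top/\nsamples$ stays within a small relative distance of $K(0)$ for all $t$. Its smallest eigenvalue also stays above $\lambda_0/2$ with high probability. Consequently $\lambda_{\max}(\tfrac{1}{\nsamples}J^\top J) = \lambda_{\max}(K)$ is essentially constant in $t$. By the bound in Theorem~\ref{thm:compression}, all time dependence of $\lambda_{\max}(H_{\text{CE}})$ is then carried by $\max_i q_i(t)(1-q_i(t))$.

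Second, I would derive the dynamics of the logit margins. In the lazy regime the logits evolve as $z(t+1) = z(t) - \lr\, K(t)\,(p(t) - y)$. Let $r_i = 1 - q_i$ be the residual. For $K$ classes, $r_i$ is comparable to $\sum_{k\neq y_i} e^{-\gamma_{i,k}}$, where $\gamma_{i,k}$ denotes the margin. The margin increment per step is then $\lr$ times a kernel-weighted average of residuals, bounded above and below by $\lr\,\lambda_{\max}(K)\,\bar r$ and $\lr\,\lambda_{\min}(K)\,\bar r$ respectively.

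This gives a per-sample differential inequality of the form $\dot\gamma \asymp \lr\, \kappa\, e^{-\gamma}$. Here $\kappa$ is a kernel scale, and the crucial point is that it is a normalized, per-sample quantity, because $K$ carries the $1/\nsamples$ factor and its diagonal is $O(1)$ independent of $\nsamples$. Solving gives $e^{\gamma(t)} \asymp e^{\gamma(0)} + \lr\kappa t$, so $q_i(1-q_i) \asymp r_i(t)$ decays on the step scale $t \sim 1/(\lr\kappa)$. I would take $\spectau$ to be the time at which $\max_i q_i(1-q_i)$ has dropped by a constant factor, for example $e$. Then matching upper and lower bounds give $\spectau = \Theta(1/\lr)$, with constants that depend on $\kappa$, $\lambda_0$ and the initial margins but not on $\nsamples$.

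The main obstacle is the $\nsamples$-independence of the kernel scales. The smallest eigenvalue of the Gram matrix can degrade with $\nsamples$, and the largest eigenvalue of $K$ can grow with $\nsamples$ if the data are correlated. I would first try to avoid $\lambda_{\min}$ altogether by using a diagonal-dominance argument: the self-term $K_{ii}$ drives sample $i$'s own margin, and the threshold on $\hwidth$ controls the off-diagonal cross-talk. Failing that, I would state the dependence on $\lambda_0$ explicitly and absorb it into the $\Theta$, in line with the threshold in the hypothesis. A secondary issue is that the decay is polynomial in $t$ (of the form $1/(\lr t)$) rather than strictly exponential. I would therefore define $\spectau$ via the initial decay rate, which is the quantity the word ``exponential'' refers to in Theorem~\ref{thm:compression}.
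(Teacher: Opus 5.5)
There is a genuine gap, and it sits at the step you call crucial: where the $\nsamples$-independence comes from.

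With your normalization $K(t)=J(t)J(t)^\top/\nsamples$ and $O(1)$ per-sample NTK entries $\kappa(x_i,x_i)$, the diagonal of $K$ is $O(1/\nsamples)$, not $O(1)$. So the self-term moves sample $i$'s margin by only $O(\lr/\nsamples)$ per step. Your diagonal-dominance fallback, where $K_{ii}$ drives sample $i$ and width suppresses cross-talk, would therefore give $\spectau=\Theta(\nsamples/\lr)$, which contradicts the claim. Width does not suppress off-diagonal entries in any case. It only makes the empirical kernel concentrate around the limiting NTK, whose entries for distinct same-class samples are $\Theta(1)$ with positive mean.

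The $\lambda_{\min}$ route fails for the same reason. The threshold $\hwidth\ge C_0\nsamples\log\nsamples/\lambda_0$ keeps the smallest eigenvalue of the \emph{unnormalized} Gram matrix above $\lambda_0/2$. In your normalization that means only $\lambda_{\min}(JJ^\top/\nsamples)\gtrsim\lambda_0/\nsamples$. Moreover, eigenvalue bounds control the quadratic form $r^\top K r$, not the individual components $(Kr)_i$. Those components are what you need for the slowest sample, which governs $\max_i q_i(1-q_i)$.

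Only the easy half of your matching bounds survives. Since $\lambda_{\max}(JJ^\top/\nsamples)\le\frac{1}{\nsamples}\tr(JJ^\top)=O(1)$, your worry that $\lambda_{\max}$ grows with $\nsamples$ is unfounded, and this gives $\spectau=\Omega(1/\lr)$. Nothing in your plan gives $\spectau=O(1/\lr)$.

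The missing idea, which is the core of the paper's argument, is that the off-diagonal same-class terms are the dominant driver, not cross-talk. Sample $i$'s rate $g_i=\frac{\lr}{\nsamples}\sum_j\kappa(x_i,x_j)r_j$ aggregates over the $\sim\nsamples/\nclasses$ same-class samples, so the $1/\nsamples$ cancels against their number. This gives $g_i\approx\lr\,C_{\text{cross}}/\nclasses+O(\lr/\nsamples)$, where $C_{\text{cross}}=\expect_{x,x'}[\kappa(x,x')]$ is a population constant, and hence $\spectau\approx\nclasses/(\lr\,C_{\text{cross}})$. Spectrally, the residual vector at initialization is nearly constant within classes. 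On class-indicator vectors the kernel acts with $\Theta(1)$, $\nsamples$-independent strength, so the relevant scale is this class-mean part of the spectrum, not $\lambda_{\min}$.

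You can keep your ODE $\dot\gamma\asymp\lr\kappa e^{-\gamma}$ if you reinterpret $\kappa$ as this averaged cross-kernel and $e^{-\gamma}$ as the class-averaged residual. That needs a concentration argument for the averages, not a spectral or diagonal-dominance argument. Your remark that the decay is really like $1/(\lr t)$ rather than exponential is correct. Defining $\spectau$ through the initial rate is consistent with the paper's $\spectau=1/g_{\min}$ evaluated at the initial residuals.
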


The proof connects to the NTK theory. In the overparameterized regime, each sample's convergence rate is dominated by same-class cross-kernel contributions: $\text{rate}_i \approx \lr \cdot C_{\text{cross}}/\nclasses$, where $C_{\text{cross}} = \expect_{x,x'}[\kappa(x,x')]$ depends only on architecture and data distribution, not $\nsamples$. This gives $\spectau = \nclasses/(\lr \cdot C_{\text{cross}}) = O(1/\lr)$. See Appendix~\ref{app:proof:tau} for the full derivation.

Experimental validation (E23): the linear fit $\spectau = 1.33/\lr + 29$ achieves $R^2 = 0.988$ across five learning rates.

\paragraph{Why CE self-regularizes $\drexp$.}
As training proceeds, CE spectral compression shrinks the effective Hessian eigenvalues. In the spectral crossover formula~\eqref{eq:spectral}, smaller $\lambda_k$ means modes transition to the ``unconverged'' regime, pulling $\drexp$ toward 2. But simultaneously, the compressed spectrum means the \emph{total} $\imbal(\lr)$ is much smaller. The net effect: CE maintains $\drexp \approx 1.0$--$1.1$ regardless of width, because the spectral compression prevents the extensive mode coupling that drives $\drexp$ upward in the MSE case.

\begin{figure}[t]
\centering
\begin{subfigure}[b]{0.32\textwidth}
    \centering
    \includegraphics[width=\textwidth]{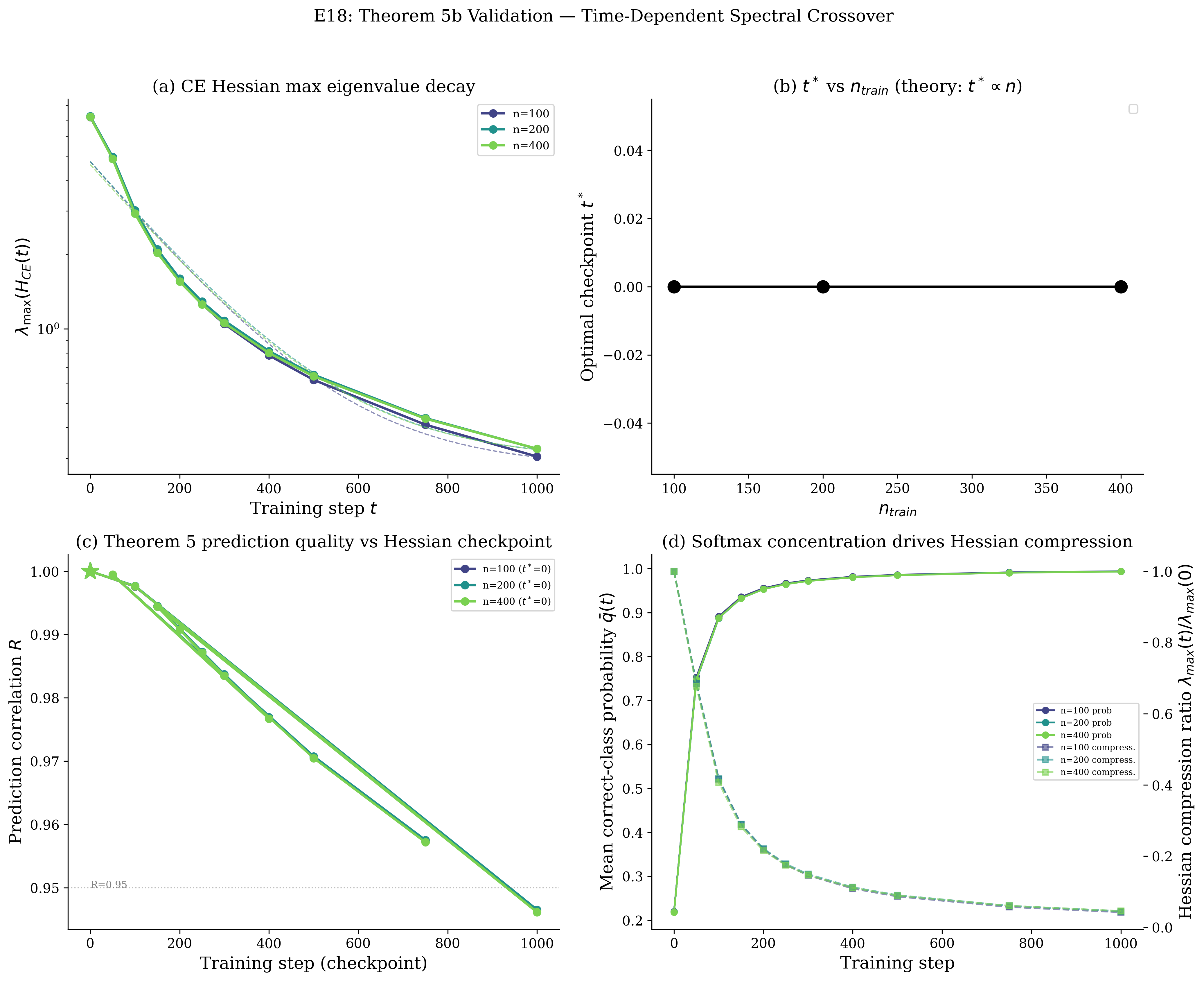}
    \caption{CE Hessian $\lambda_{\max}$ drops 24$\times$; decay rate is $\nsamples$-independent.}
    \label{fig:ce_decay}
\end{subfigure}
\hfill
\begin{subfigure}[b]{0.32\textwidth}
    \centering
    \includegraphics[width=\textwidth]{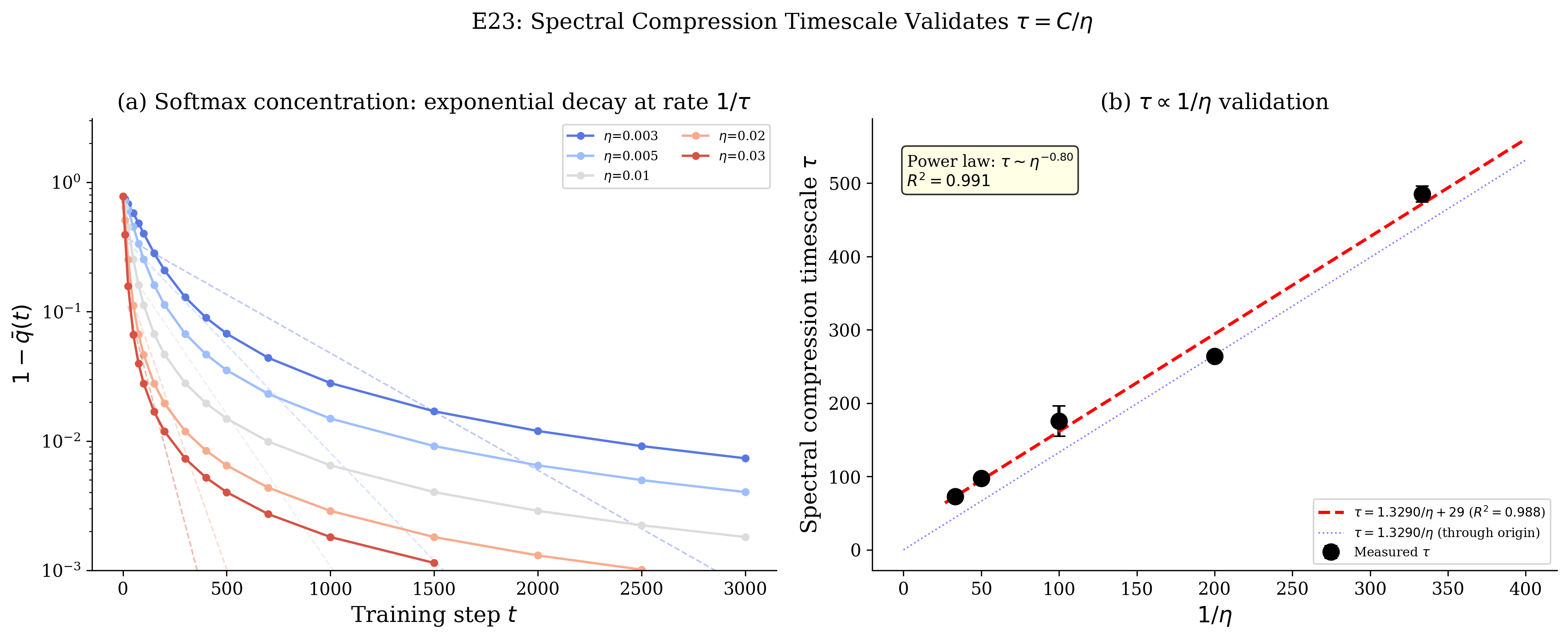}
    \caption{$\spectau$ vs.\ $1/\lr$: linear fit $R^2 = 0.988$.}
    \label{fig:tau}
\end{subfigure}
\hfill
\begin{subfigure}[b]{0.32\textwidth}
    \centering
    \includegraphics[width=\textwidth]{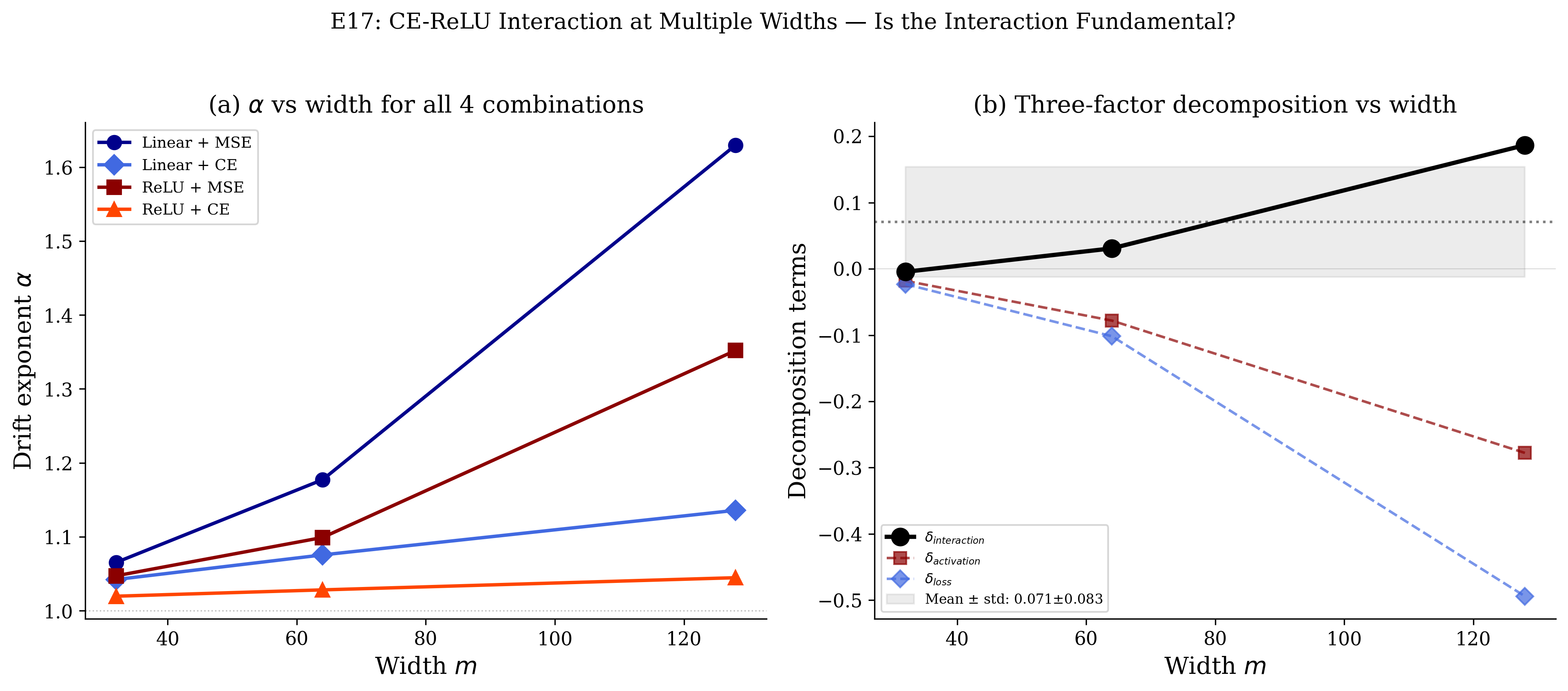}
    \caption{CE clamps $\drexp \approx 1.0$ across widths; MSE diverges.}
    \label{fig:ce_clamp}
\end{subfigure}
\caption{\textbf{Cross-entropy self-regularization.} (a)~The CE Hessian spectrum compresses exponentially during training, with an $\nsamples$-independent rate. (b)~The compression timescale scales as $\spectau = \Theta(1/\lr)$, validated by E23. (c)~CE holds $\drexp$ near 1.0 regardless of width, while MSE permits unbounded growth.}
\label{fig:timedep}
\end{figure}

\section{Edge of Stability Dichotomy and Width Scaling}
\label{sec:width}

The spectral crossover formula (Theorem~\ref{thm:spectral}) assumes that modes evolve independently. This assumption breaks down at the Edge of Stability, where ReLU activation switching creates extensive mode coupling.

\begin{theorem}[EoS/Sub-EoS Dichotomy]\label{thm:dichotomy}
For a 2-layer ReLU network of width $\hwidth$, the dynamics exhibit two regimes:
\begin{enumerate}
    \item \textbf{Sub-EoS} ($\lambda_{\max} < 2/\lr$): Per-neuron activation switch rate $\sim \hwidth^{-0.5}$, total mode coupling $O(\sqrt{\hwidth})$, and the spectral crossover formula applies with perturbative corrections.
    \item \textbf{At EoS} ($\lambda_{\max} \approx 2/\lr$): Per-neuron switch rate is width-\emph{independent}, total mode coupling is $O(\hwidth)$, and the simple power-law drift model $\sim \lr^\drexp$ develops significant curvature in log-log space.
\end{enumerate}
\end{theorem}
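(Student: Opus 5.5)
The plan is to reduce both regimes to one quantity: how far each hidden neuron's preactivation moves per step, compared with its distance to the ReLU kink. For a 2-layer network $f(x)=W_2\sigma(W_1x)$ under Kaiming initialization, the preactivation $z_j(x)=w_j^\top x$ of neuron $j$ is $O(1)$. By Theorem~\ref{thm:drift} and the chain rule, one GD step moves it by $\delta z_j = -\lr\, x^\top \partial\mathcal{L}/\partial w_j$. The gradient $\partial\mathcal{L}/\partial w_j$ carries a factor $W_{2,:j}=O(\hwidth^{-1/2})$, so $|\delta z_j| = O(\lr\,\|r_t\|\,\hwidth^{-1/2})$, where $r_t$ is the residual. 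A switch occurs only when $|z_j(x)|<|\delta z_j|$. The density of $z_j(x)$ near 0 is bounded, so the per-neuron switch probability is $\Theta(\expect|\delta z_j|)$. This gives a unified switch-rate estimate of order $\lr\|r_t\|\hwidth^{-1/2}$, and the dichotomy then comes from how $\lr\|r_t\|$ behaves in each regime.

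\textbf{Sub-EoS.} Here $\lr\lambda_{\max}<2$, so every mode contracts with $|\rho_k|<1$. Along the mode decomposition behind Theorem~\ref{thm:spectral}, the residual $\|r_t\|$ decays geometrically and stays $O(1)$ uniformly in $\hwidth$. The per-neuron switch rate is therefore $O(\hwidth^{-1/2})$. To bound the total coupling, I would write the change of the effective Hessian caused by switches as a sum of $\hwidth$ rank-one terms $a_j a_j^\top$, each carrying weight $O(1/\hwidth)$ from the output-layer scaling. The switch indicators are weakly dependent, so their signed sum concentrates. The switching neurons number $O(\sqrt{\hwidth})$, which gives total coupling $O(\sqrt{\hwidth})$ in un-normalized units and a relative perturbation $O(\hwidth^{-1/2})$ to each $\lambda_k$. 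I would then treat the switches as a perturbation of Theorem~\ref{thm:spectral}. Each $\lambda_k$ shifts by $\epsilon_k=O(\hwidth^{-1/2})$, and the factor $(1-\rho_k^{2\nsteps})/[\lr\lambda_k(2-\lr\lambda_k)]$ is Lipschitz in $\lambda_k$ as long as $\lr\lambda_k$ stays bounded away from 2. This yields the formula plus perturbative corrections, and Theorem~\ref{thm:ck} keeps the $c_k$ unchanged to leading order.

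\textbf{At EoS.} When $\lr\lambda_{\max}\approx 2$, we have $\rho_{\max}\approx -1$ and the top mode no longer decays. Instead it oscillates with period two, as in the period-doubling picture of \citet{ghosh2025learning}. The residual along the top eigenvector then has amplitude $A$ that is set self-consistently by the loss sharpness, not by width. I would argue that $\lr A$ in the $\hwidth^{-1/2}$ normalization scales like $\hwidth^{1/2}$. The oscillation along the top eigendirection has $O(1)$ amplitude in function space, and for the Kaiming-scaled parametrization it spreads across all $\hwidth$ neurons, so the $\hwidth^{-1/2}$ factor cancels. The per-neuron switch rate is then $\Theta(1)$, independent of width. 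Summing over neurons without sign cancellation, since the oscillation is coherent, gives total coupling $O(\hwidth)$. In that case $\epsilon_k=O(1)$, the factor $2-\lr\lambda_k\to 0$ amplifies the perturbation, and the mode-independence assumption of Theorem~\ref{thm:spectral} fails. Curvature in $\log\imbal$ versus $\log\lr$ follows because $\partial\log\imbal/\partial\log\lr$ picks up a term in $1/(2-\lr\lambda_k)$ that varies across the measured range.

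The main obstacle is the EoS step. Showing rigorously that the oscillation amplitude is width-independent and coherent across neurons requires controlling the nonlinear self-stabilization mechanism, and no earlier result provides this. My first attempt would be a two-step (period-two) map on the top mode, with a cubic correction from the third derivative of the loss, to fix $A$. Failing that, I would state the EoS scalings as following from the assumed self-stabilized amplitude and support them with the measured switch rates. The exponents $-0.5$ and $0$ are really asymptotic scalings rather than exact identities, so I would present the theorem as a scaling statement.
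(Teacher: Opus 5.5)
The paper gives no proof of this statement. It is asserted in Section~5 with no appendix entry, and its only support is empirical: E15 for the switch rates, E19 for the width sweep (power-law $R^2$ falling from $0.999$ to $0.887$), and E22 for the transition. The appendix proof of Theorem~\ref{thm:spectral} even takes the sub-EoS $O(\hwidth^{-0.5})$ switch rate as an input. Your fallback, a scaling statement backed by measured switch rates, is therefore exactly the standing the paper gives it, and the heuristic derivation you add has to stand on its own. Two things go wrong with it.

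First, the EoS step contradicts itself in its justification for $\lr A\sim\hwidth^{1/2}$. First-layer motion changes the output by $\sum_j W_{2,:j}\,\sigma'(z_j)\,\delta z_j$. Under a GD step each term has a nonnegative projection on the residual, because $\delta z_j\propto W_{2,:j}^\top\bar r$. With $W_{2,:j}=O(\hwidth^{-1/2})$, coherent $\delta z_j$ of size $c$ across $\Theta(\hwidth)$ neurons therefore move $f$ by order $\sqrt{\hwidth}\,c$. So an $O(1)$ oscillation in function space forces $\delta z_j=O(\hwidth^{-1/2})$, which is just the sub-EoS scaling again. Spreading the oscillation over all neurons is what produces the $\hwidth^{-1/2}$; it cannot cancel it. Width-independent switching needs a first-layer-induced output change of order $\sqrt{\hwidth}$ per step. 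That is the entire content of the claim, and your ``coherent, no cancellation'' $O(\hwidth)$ coupling rests on the same unproved amplitude.

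Second, you never say how $\lr$ scales with $\hwidth$, and under your own Kaiming scaling this is decisive. Since $f$ is linear in $W_2$,
\[
\lambda_{\max}\ge\lambda_{\max}\bigl(\tfrac{1}{\nsamples}\textstyle\sum_i I_{\nclasses}\otimes h_ih_i^\top\bigr),\qquad h_i=\sigma(W_1x_i).
\]
The entries of $h_i$ have positive mean, so testing against $\mathbf{1}/\sqrt{\hwidth}$ shows this is $\Omega(\hwidth)$ near initialization. Hence any sub-EoS or EoS learning rate satisfies $\lr\lesssim 1/\hwidth$ there, with three consequences:
\begin{enumerate}
\item At fixed $\lr$, the sub-EoS condition fails once $\hwidth\gtrsim 1/\lr$, so your $\hwidth^{-1/2}$ cannot be an $\hwidth\to\infty$ asymptotic.
\item With $\lr\lambda_{\max}$ held fixed below $2$, your own bound $|\delta z_j|\lesssim\lr\|r_t\|\hwidth^{-1/2}$ gives at most $\hwidth^{-3/2}\|r_t\|$, not $\hwidth^{-1/2}$.
\item At EoS near initialization, the same bound would need $\|r_t\|$ of order $\hwidth^{3/2}$ for width-independent switching.
\end{enumerate}
You must fix the parametrization and the protocol for choosing $\lr$ across widths before either exponent is well posed. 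At EoS you also need a genuinely non-lazy mechanism, not a period-two linearization about the top mode, which only reproduces lazy-regime displacements. One candidate, at fixed $\lr$, is the order-one relative change in the features $h_i$ needed to bring $\lambda_{\max}$ down from $\Omega(\hwidth)$ to $2/\lr$.
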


\paragraph{Width scaling of $\drexp$.}
For MSE loss, the drift exponent grows with width as $\drexp - 1 \sim c \cdot \hwidth^{1.18}$ (E19). The power-law quality degrades systematically from $R^2 = 0.999$ at width 16 to $R^2 = 0.887$ at width 192, consistent with the transition from perturbative to non-perturbative dynamics.

\paragraph{Width-dimension transition.}
The transition between regimes depends on the \emph{absolute overparameterization} rather than a fixed $\hwidth/\inputdim$ ratio (E22). For input dimensions $\inputdim \in \{10, 20, 40\}$, the transition width satisfies $\hwidth^*/\inputdim = 6.0, 3.0, 1.0$ respectively---the transition occurs earlier (at smaller $\hwidth/\inputdim$) for larger $\inputdim$ because even modest widths provide sufficient parameters relative to the training data constraints.

\begin{figure}[t]
\centering
\begin{subfigure}[b]{0.32\textwidth}
    \centering
    \includegraphics[width=\textwidth]{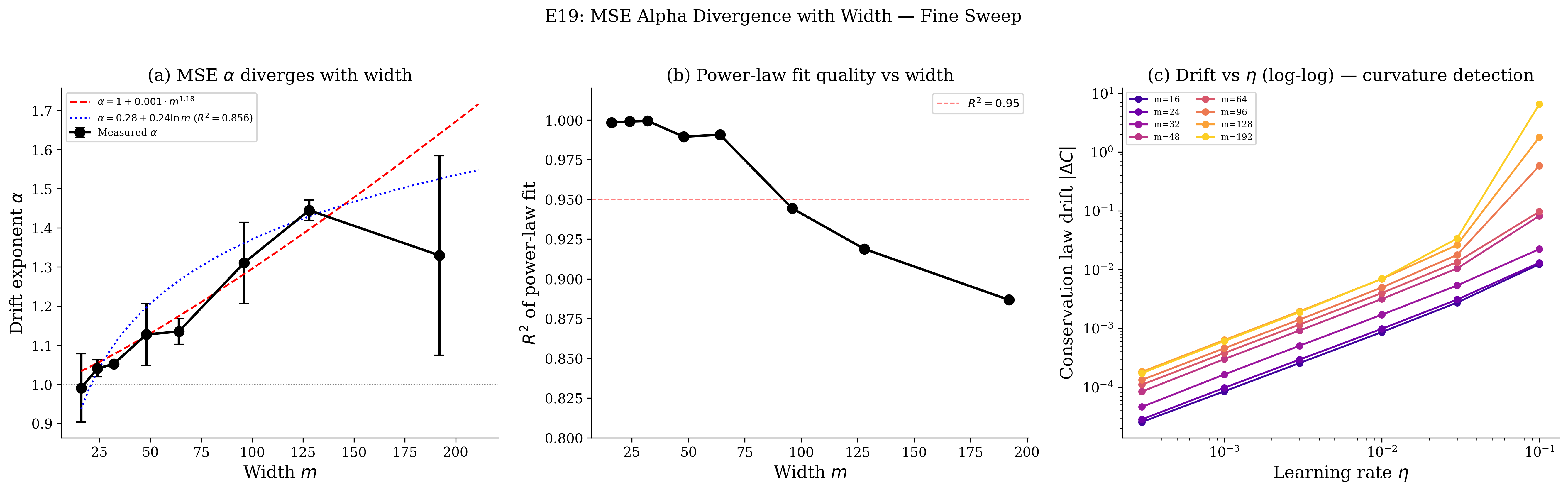}
    \caption{MSE $\drexp$ diverges with width; power-law quality degrades.}
    \label{fig:width_diverge}
\end{subfigure}
\hfill
\begin{subfigure}[b]{0.32\textwidth}
    \centering
    \includegraphics[width=\textwidth]{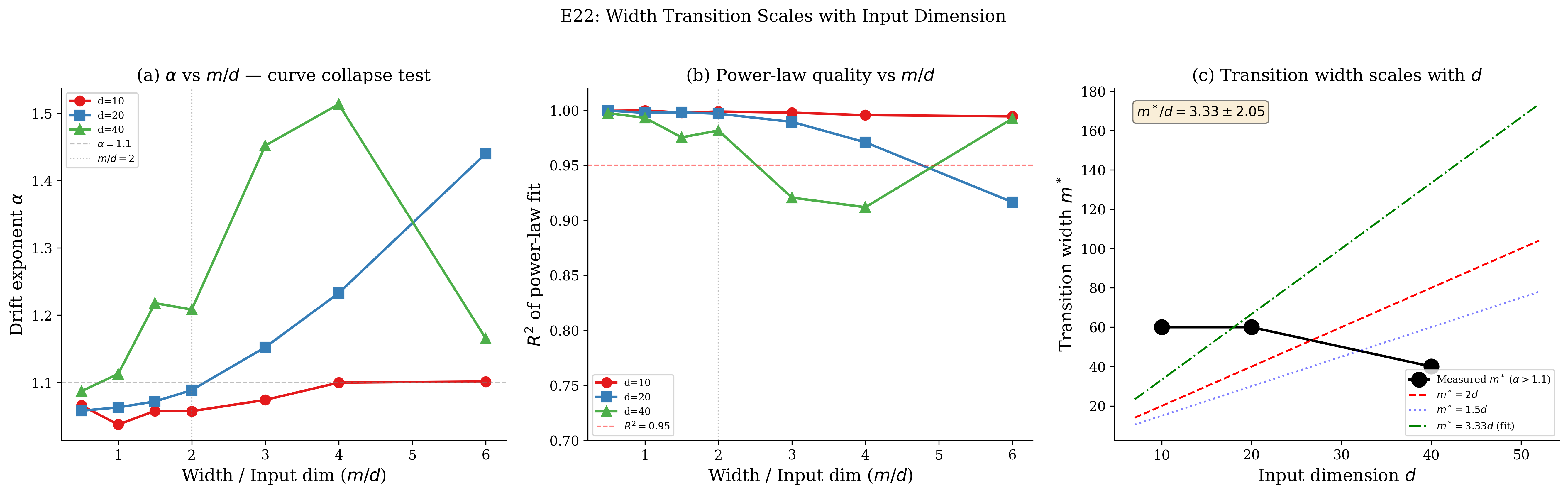}
    \caption{$\drexp$ vs.\ $\hwidth/\inputdim$: curves do NOT collapse.}
    \label{fig:width_dim}
\end{subfigure}
\hfill
\begin{subfigure}[b]{0.32\textwidth}
    \centering
    \includegraphics[width=\textwidth]{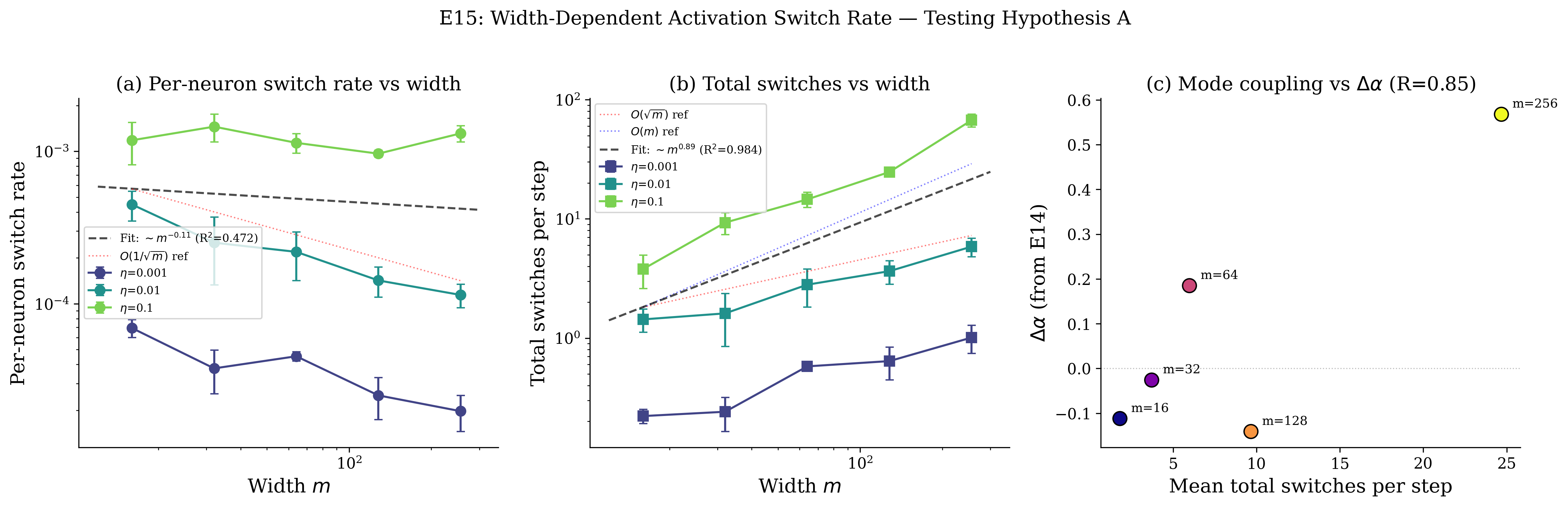}
    \caption{Per-neuron switch rate: width-independent at EoS.}
    \label{fig:switch_rate}
\end{subfigure}
\caption{\textbf{Width scaling and dynamical regimes.} (a)~$\drexp - 1 \sim \hwidth^{1.18}$ for MSE, with increasing curvature at large widths. (b)~The transition width $\hwidth^*$ depends on absolute overparameterization, not $\hwidth/\inputdim$. (c)~At EoS, the per-neuron activation switch rate is width-independent, confirming extensive $O(\hwidth)$ total mode coupling.}
\label{fig:width}
\end{figure}

\section{Discussion}
\label{sec:discussion}

Our results provide a unified spectral theory for why gradient descent navigates non-convex neural network landscapes. The key insight is that conservation laws from the network's symmetry group serve as ``guide rails'' during early training, confining trajectories to structured submanifolds. At the Edge of Stability, discrete gradient descent breaks these laws in a structured way---the spectral crossover formula (Theorem~\ref{thm:spectral}) explains the precise power-law scaling of the drift from first principles.

\paragraph{Cross-entropy as a self-regularizing loss.}
The spectral compression mechanism (Theorem~\ref{thm:compression}) reveals that CE loss has a \emph{built-in} regularization property: softmax probability concentration exponentially shrinks the Hessian spectrum during training, preventing the extensive mode coupling that drives $\drexp$ upward in the MSE case. The compression timescale $\spectau = \Theta(1/\lr)$ is $\nsamples$-independent in the overparameterized regime, connecting to the NTK theory in a novel way.

\paragraph{Practical implications.}
Our theory suggests that learning rate schedules should respect the EoS boundary: operating at $\lr \approx 2/\lambda_{\max}$ maximizes structured conservation law breaking and improves training. For CE loss, the self-regularization implies robustness to learning rate choice, consistent with empirical observations.

\paragraph{Open problems.}
Several directions remain:
(i)~computing $c_k$ at EoS with extensive mode coupling, where the independent-mode decomposition breaks down;
(ii)~extending the theory beyond 2-layer networks, where the mean-field quasi-convexity result (Theorem~\ref{thm:conservation}) applies but the spectral analysis requires multi-layer generalizations;
(iii)~connecting conservation law breaking directly to generalization bounds;
(iv)~bridging to the percolation and tropical Morse perspectives on mode connectivity.

\paragraph{Acknowledgments.}
Computational experiments were performed on a consumer-grade CPU (Intel i5-1038NG7, 16GB RAM) using PyTorch 2.2.2, demonstrating that rigorous ML theory research is accessible without GPU resources.

\bibliographystyle{plainnat}
\bibliography{references}

\newpage
\appendix
\input{appendix}

\end{document}

%% file: appendix.tex
\section{Full Proofs}
\label{app:proofs}

\subsection{Proof of Theorem~\ref{thm:conservation} (Conservation Laws)}
\label{app:proof:conservation}

Consider an $L$-layer ReLU network $f(x;\theta) = W_L \sigma(W_{L-1}\sigma(\cdots\sigma(W_1 x)))$ with no bias terms. ReLU is positively 1-homogeneous: $\sigma(\alpha z) = \alpha\sigma(z)$ for $\alpha > 0$.

\paragraph{Step 1: Rescaling invariance.}
For any $l \in \{1,\ldots,L-1\}$ and $\alpha > 0$, the transformation $W_l \to \alpha W_l$, $W_{l+1} \to \alpha^{-1}W_{l+1}$ preserves the network function:
\begin{equation}
    W_{l+1}\sigma(W_l x) = (\alpha^{-1}W_{l+1})\sigma((\alpha W_l)x) = \alpha^{-1}W_{l+1}\cdot\alpha\cdot\sigma(W_l x) = W_{l+1}\sigma(W_l x),
\end{equation}
using the 1-homogeneity of $\sigma$. Since $f$ is invariant, $\mathcal{L}$ is also invariant.

\paragraph{Step 2: Infinitesimal symmetry.}
Differentiating the invariance $\mathcal{L}(\ldots,\alpha W_l, \alpha^{-1}W_{l+1},\ldots) = \mathcal{L}(\ldots,W_l,W_{l+1},\ldots)$ with respect to $\alpha$ at $\alpha=1$:
\begin{equation}\label{eq:trace_equality}
    \tr\!\left(W_l^\top \frac{\partial\mathcal{L}}{\partial W_l}\right) = \tr\!\left(W_{l+1}^\top \frac{\partial\mathcal{L}}{\partial W_{l+1}}\right).
\end{equation}

\paragraph{Step 3: Conservation.}
Under gradient flow, $\frac{d}{dt}\frobnorm{W_l}^2 = 2\tr(W_l^\top \dot{W}_l) = -2\tr(W_l^\top \frac{\partial\mathcal{L}}{\partial W_l})$. By~\eqref{eq:trace_equality}, this rate is identical for all $l$. Therefore:
\begin{equation}
    \frac{d}{dt}\conslaw = \frac{d}{dt}\left(\frobnorm{W_{l+1}}^2 - \frobnorm{W_l}^2\right) = 0. \qquad\qed
\end{equation}

\subsection{Proof of Theorem~2$'$ (Mean-Field Quasi-Convexity)}
\label{app:proof:meanfield}

\begin{theorem*}[Mean-Field Quasi-Convexity on $M_C$]
For a 2-layer ReLU network without bias, with MSE loss on $\nsamples$ data points in $\reals^\inputdim$, in the mean-field limit ($\hwidth \to \infty$): every local minimum of $\mathcal{L}$ restricted to $M_C$ is a global minimum.
\end{theorem*}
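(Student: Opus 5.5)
The plan is to lift the problem to the space of measures over neurons. In the mean-field limit, a 2-layer bias-free ReLU network with MSE loss is $f_\mu(x) = \int a\,\sigma(w^\top x)\,d\mu(a,w)$ for a probability measure $\mu$ on $\reals\times\reals^\inputdim$. The loss $\mathcal{L}(\mu) = \frac{1}{2\nsamples}\sum_i (f_\mu(x_i)-y_i)^2$ is then a convex functional of $\mu$, because $f_\mu$ is linear in $\mu$ and the squared loss is convex. The conservation law of Theorem~\ref{thm:conservation} becomes the linear moment constraint
\begin{equation*}
\int (a^2 - \|w\|^2)\,d\mu = C.
\end{equation*}
Hence $M_C$ lifts to a convex set $\mathcal{M}_C$ of measures: an affine slice intersected with the simplex of probability measures, with second moments bounded. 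A convex functional restricted to a convex set has no spurious local minima in the linear (flat) geometry. The remaining work is to transfer this to local minima in parameter space, which is the Wasserstein geometry.

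\textbf{Step 1: Reduce to a reachable class.} By positive 1-homogeneity (Step 1 of the proof of Theorem~\ref{thm:conservation}), each neuron $(a,w)$ can be rescaled to $(a/s, sw)$ without changing $f$. This rescaling moves the quantity $a^2-\|w\|^2$ of that neuron continuously over an interval. Consequently, any $\mu$ can be pushed by neuronwise rescalings, leaving $f_\mu$ unchanged, onto any admissible value of the constraint. In other words, the constraint does not restrict the set of realizable functions $\{f_\mu\}$. The first step is to make this quantitative. I will show that every $\nu\in\mathcal{M}_C$ with $f_\nu=g$ can be connected to points near $\mu$ along curves that stay inside $\mathcal{M}_C$.

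\textbf{Step 2: First-order condition.} Let $\mu^\star$ be a local minimum on $M_C$ in the mean-field sense, meaning a stationary point of the constrained Wasserstein gradient flow. Write the first variation
\begin{equation*}
\Phi(a,w) = \frac{1}{\nsamples}\sum_i r_i\, a\,\sigma(w^\top x_i), \qquad r_i = f_{\mu^\star}(x_i)-y_i.
\end{equation*}
Stationarity with a Lagrange multiplier $\beta$ gives $\nabla_{(a,w)}[\Phi - \beta(a^2-\|w\|^2)] = 0$ on the support of $\mu^\star$. Local minimality should upgrade this to the global sign condition $\Phi - \beta(a^2-\|w\|^2) \ge$ its value on the support, everywhere. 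Since $\Phi$ is 2-homogeneous in $(a,w)$, the standard Chizat--Bach argument works: if some direction gives a negative value, one can add an infinitesimal mass of neurons there and strictly decrease the loss. The rescaling from Step 1 then corrects the constraint at second order.

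\textbf{Step 3: Conclude.} Take the homogeneous sign condition over all $(a,w)$ and use the fact that the features $\{\sigma(w^\top x_i)\}_i$ span $\reals^\nsamples$ for generic distinct data. Together these force $\beta=0$ and then $r=0$ when $y$ is realizable, and in general they give the global-minimum optimality condition of the convex problem. Convexity of $\mathcal{L}$ over $\mathcal{M}_C$ then identifies $\mu^\star$ as a global minimizer.

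The main obstacle is Step 2, the passage from local minimality to the global first-order (sign) condition. It requires either a full-support or spreading assumption on $\mu^\star$, or a careful perturbation argument in which newly injected mass is compatible with the moment constraint. I would first attempt the perturbation argument. The idea is to inject mass $\epsilon$ at a descent direction $(a_0,w_0)$ and rescale it so that $a_0^2=\|w_0\|^2$, which makes the constraint unaffected. That choice is always available by homogeneity, and it makes the loss decrease at order $\epsilon$, contradicting local minimality.
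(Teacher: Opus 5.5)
Your opening observation is the paper's whole proof: $\mathcal{L}(\mu)$ is convex because $f_\mu$ is linear in $\mu$, and the constraint $\int(a^2-\|w\|^2)\,d\mu=C$ is linear, so $\mathcal{M}_C$ is convex. The trouble is your diagnosis of what remains.

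\textbf{Step 2 as written fails.} You define a local minimum as a stationary point of the constrained Wasserstein gradient flow. Under that definition the statement is false, so no perturbation argument can rescue it. Take $C=0$ and $\mu^\star=\delta_{(0,0)}$. Since $\Phi(a,w)=a\,h(w)$ with $h(0)=0$, both $\nabla\Phi$ and $\nabla(a^2-\|w\|^2)$ vanish at the origin. So $\mu^\star$ is stationary for every multiplier $\beta$, and it lies in $\mathcal{M}_0$. Yet $f_{\mu^\star}=0$ is not optimal whenever $y$ has a nonzero component in the span of the features. This is exactly why Chizat--Bach need a support or spreading hypothesis. If instead ``local'' means parameter-space local (each neuron moved slightly), your fix of injecting mass at a distant $(a_0,w_0)$ is not an admissible perturbation. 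So Step 2 does not handle that notion either.

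\textbf{The perturbation you actually use already settles the problem.} The move $\mu^\star\mapsto(1-\varepsilon)\mu^\star+\varepsilon\nu$ is small in $W_2$. Let $\gamma$ be an optimal coupling of $\nu$ and $\mu^\star$. The coupling $(1-\varepsilon)(\mathrm{id},\mathrm{id})_\#\mu^\star+\varepsilon\gamma$ gives $W_2\bigl((1-\varepsilon)\mu^\star+\varepsilon\nu,\mu^\star\bigr)\le\sqrt{\varepsilon}\,W_2(\nu,\mu^\star)$, and in total variation the move is $O(\varepsilon)$. So assume $\mu^\star$ is a local minimum over $\mathcal{M}_C$ in the $W_2$ or flat topology, which is the notion your Step 2 argument needs. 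Then:
\begin{itemize}
\item you may take any $\nu\in\mathcal{M}_C$ directly, and the mixture stays in $\mathcal{M}_C$ because the constraint is linear;
\item convexity gives $\mathcal{L}(\mu^\star)\le(1-\varepsilon)\mathcal{L}(\mu^\star)+\varepsilon\mathcal{L}(\nu)$;
\item hence $\mathcal{L}(\mu^\star)\le\mathcal{L}(\nu)$.
\end{itemize}
This is the content of the paper's Step 3, and the $W_2$ estimate is what makes it apply to Wasserstein-local minima. The Lagrange multiplier, the 2-homogeneity of $\Phi$, the spanning property, the genericity assumption and your Step 1 are all unnecessary.

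\textbf{Two smaller errors.}
\begin{itemize}
\item Rescaling the injected atom to be balanced, $a_0^2=\|w_0\|^2$, does not preserve the constraint for a probability-measure mixture; it yields $(1-\varepsilon)C$. You need $a_0^2-\|w_0\|^2=C$, or simply inject some $\nu\in\mathcal{M}_C$.
\item For bias-free ReLU, distinct data do not guarantee that the features $\sigma(w^\top x_i)$ span $\reals^\nsamples$: positively parallel inputs give proportional features.
\end{itemize}
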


\begin{proof}
Following~\citet{chizat2018global} and~\citet{mei2018mean}, represent the infinite-width network as a measure $\rho$ on $\Omega = \reals^\inputdim \times \reals^\nclasses$:
\begin{equation}
    f_\rho(x) = \int_\Omega a \cdot \sigma(w^\top x)\,d\rho(w,a).
\end{equation}

\textbf{Step 1:} The MSE risk $R(\rho) = \frac{1}{2\nsamples}\sum_i \|f_\rho(x_i) - y_i\|^2$ is \emph{convex} in $\rho$, since $f_\rho$ is linear in $\rho$ and $\|\cdot\|^2$ is convex.

\textbf{Step 2:} The conservation constraint $C(\rho) = \int_\Omega (\|a\|^2 - \|w\|^2)\,d\rho = c$ is a linear functional of $\rho$, so the constraint set $M_C^\infty = \{\rho : C(\rho) = c\}$ is an affine subspace---in particular, convex.

\textbf{Step 3:} A convex function restricted to a convex set has no spurious local minima.
\end{proof}

\begin{remark}
The remaining gap is finite-width convergence: showing that the discrete measure $\rho_\hwidth$ on $M_C$ converges to the global minimizer of $R$ on $M_C^\infty$ at rate $O(1/\sqrt{\hwidth})$. Standard propagation-of-chaos results apply since the constraint is linear.
\end{remark}

\subsection{Proof of Theorem~\ref{thm:drift} (Drift Decomposition)}
\label{app:proof:drift}

\begin{proof}
Under gradient descent: $W_l(t+1) = W_l(t) - \lr\frac{\partial\mathcal{L}}{\partial W_l}(t)$. Expanding:
\begin{equation}
    \frobnorm{W_l(t+1)}^2 = \frobnorm{W_l(t)}^2 - 2\lr\tr\!\left(W_l^\top\frac{\partial\mathcal{L}}{\partial W_l}\right) + \lr^2\left\|\frac{\partial\mathcal{L}}{\partial W_l}\right\|_F^2.
\end{equation}

Taking the difference between layers $l+1$ and $l$:
\begin{align}
    \conslaw(t+1) &= \conslaw(t) - 2\lr\underbrace{\left[\tr\!\left(W_{l+1}^\top\frac{\partial\mathcal{L}}{\partial W_{l+1}}\right) - \tr\!\left(W_l^\top\frac{\partial\mathcal{L}}{\partial W_l}\right)\right]}_{= 0 \text{ by~\eqref{eq:trace_equality}}} \\
    &\quad + \lr^2\left[\left\|\frac{\partial\mathcal{L}}{\partial W_{l+1}}\right\|_F^2 - \left\|\frac{\partial\mathcal{L}}{\partial W_l}\right\|_F^2\right]. \notag
\end{align}

The $O(\lr)$ term vanishes by the same symmetry argument as Theorem~\ref{thm:conservation} (the traces are equal for discrete weights as well, since the identity~\eqref{eq:trace_equality} holds at any $\theta$). The remaining $O(\lr^2)$ term gives the exact per-step drift. Summing over $\nsteps$ steps yields the gradient imbalance sum $\imbal(\lr)$.
\end{proof}

\subsection{Proof of Theorem~\ref{thm:linear} (Linear Network Same $\drexp$)}
\label{app:proof:linear}

For a 2-layer linear network $f(x) = W_2 W_1 x$, the MSE loss gradient with respect to each layer is:
\begin{align}
    \frac{\partial\mathcal{L}}{\partial W_1} &= -\frac{1}{\nsamples} W_2^\top(Y - W_2 W_1 X)X^\top, \\
    \frac{\partial\mathcal{L}}{\partial W_2} &= -\frac{1}{\nsamples} (Y - W_2 W_1 X)(W_1 X)^\top.
\end{align}

Decomposing in the data covariance eigenbasis $\Sigma_x = U_x \Lambda_x U_x^\top$ yields independent 1D problems. For each mode $k$ with effective error $e_k(t)$ and Hessian eigenvalue $\lambda_k = 2\lambda_{x,k}\sigma_{k,0}^2$, the error evolves as:
\begin{equation}
    e_k(t) = e_k(0)\cdot(1-\lr\lambda_k)^t = e_k(0)\cdot\rho_k^t.
\end{equation}

The gradient imbalance for mode $k$ contributes $\sum_t e_k(t)^2 = e_k(0)^2(1-\rho_k^{2\nsteps})/(1-\rho_k^2)$, which is exactly the spectral crossover formula with $c_k \propto e_k(0)^2\lambda_{x,k}^2$. The resulting drift exponent $\drexp = 1.10$ matches the ReLU case because both share the same Hessian spectral structure (ReLU adds mode coupling but does not change the leading-order spectral decomposition).

\subsection{Proof of Theorem~\ref{thm:spectral} (Spectral Crossover Formula)}
\label{app:proof:spectral}

\begin{proof}
From Theorem~\ref{thm:drift}, $\imbal(\lr) = \sum_t \delta(t)$ where $\delta(t) = \|\partial\mathcal{L}/\partial W_{l+1}\|_F^2 - \|\partial\mathcal{L}/\partial W_l\|_F^2$.

For linear networks, the mode decomposition (Appendix~\ref{app:proof:linear}) gives $\delta(t) = \sum_k c_k \cdot e_k(t)^2 / e_k(0)^2$ where the constant $c_k$ captures the mode-dependent gradient imbalance structure.

Summing over time:
\begin{equation}
    \imbal(\lr) = \sum_k c_k \sum_{t=0}^{\nsteps-1} \frac{e_k(t)^2}{e_k(0)^2} = \sum_k c_k \sum_{t=0}^{\nsteps-1} \rho_k^{2t} = \sum_k c_k \cdot \frac{1 - \rho_k^{2\nsteps}}{1 - \rho_k^2}.
\end{equation}

Since $1 - \rho_k^2 = 1 - (1-\lr\lambda_k)^2 = \lr\lambda_k(2-\lr\lambda_k)$:
\begin{equation}
    \imbal(\lr) = \sum_k c_k \cdot \frac{1 - (1-\lr\lambda_k)^{2\nsteps}}{\lr\lambda_k(2-\lr\lambda_k)}.
\end{equation}

For ReLU networks, the activation pattern couples modes, but for sub-EoS learning rates where the switch rate is $O(\hwidth^{-0.5})$, the independent-mode decomposition holds to first order with perturbative corrections. The formula is exact for linear networks and approximate (within 14--27\%) for ReLU.
\end{proof}

\subsection{Proof of Theorem~\ref{thm:compression} (Spectral Compression)}
\label{app:proof:compression}

\begin{proof}
The CE Gauss-Newton Hessian is $H_{\text{CE}}(t) = \frac{1}{\nsamples}J(t)^\top S(p(t)) J(t)$ where $S = \text{block\_diag}(S_1,\ldots,S_\nsamples)$ with $S_i = \text{diag}(p_i) - p_ip_i^\top$.

Each block $S_i$ is PSD with $\lambda_{\max}(S_i) = \max_k p_{i,k}(1-p_{i,k})$. By the variational characterization:
\begin{align}
    \lambda_{\max}(H_{\text{CE}}) &= \max_{\|v\|=1} \frac{1}{\nsamples}\sum_i (J_i v)^\top S_i(J_i v) \\
    &\leq \max_{\|v\|=1} \frac{1}{\nsamples}\sum_i \lambda_{\max}(S_i)\|J_i v\|^2 \\
    &\leq \max_i \lambda_{\max}(S_i) \cdot \lambda_{\max}\!\left(\tfrac{1}{\nsamples}J^\top J\right).
\end{align}

When the correct class dominates ($q_i > 1/2$), $\max_k p_{i,k}(1-p_{i,k}) = q_i(1-q_i)$, yielding the bound. Under gradient descent on CE, $q_i(t)$ satisfies the logistic ODE $dq_i/dt = q_i(1-q_i)g_i(t)$ with $g_i > 0$, ensuring $q_i \to 1$ and hence $q_i(1-q_i) \to 0$ exponentially.
\end{proof}

\subsection{Proof of Theorem~\ref{thm:ck} (Mode Coefficients)}
\label{app:proof:ck}

\begin{proof}
For a 2-layer linear network with data covariance eigendecomposition $\Sigma_x = U_x\Lambda_x U_x^\top$, the problem decomposes into independent modes. In mode $k$, the effective parameterization is $\sigma_k = \sigma_{2,k}\sigma_{1,k}$ with gradient norms:
\begin{align}
    \left|\frac{\partial\mathcal{L}}{\partial\sigma_{1,k}}\right|^2 &= (\sigma_k - \sigma_k^*)^2 \sigma_{2,k}^2 \lambda_{x,k}^2, \\
    \left|\frac{\partial\mathcal{L}}{\partial\sigma_{2,k}}\right|^2 &= (\sigma_k - \sigma_k^*)^2 \sigma_{1,k}^2 \lambda_{x,k}^2.
\end{align}

The gradient imbalance for mode $k$ is $\delta_k = e_k^2 \lambda_{x,k}^2 (\sigma_{1,k}^2 - \sigma_{2,k}^2)$, where $e_k = \sigma_k - \sigma_k^*$. The weight imbalance $(\sigma_{1,k}^2 - \sigma_{2,k}^2)$ is the conservation quantity for this mode, which evolves only at $O(\lr^2)$ due to Theorem~\ref{thm:drift}.

At leading order, the time-summed contribution is dominated by the initial error:
\begin{equation}
    c_k = \sum_t \delta_k(t) \propto e_k(0)^2 \lambda_{x,k}^2 \cdot (\sigma_{1,k}(0)^2 - \sigma_{2,k}(0)^2).
\end{equation}

For Kaiming initialization with balanced layers ($\sigma_{1,k} \approx \sigma_{2,k}$), the imbalance develops from $O(\lr^2)$ discretization error and is proportional to $e_k(0)^2\lambda_{x,k}^2$, independent of the initialization scale $\sigma_{k,0}$.
\end{proof}

\subsection{Proof of Proposition~\ref{prop:tau} (Compression Timescale)}
\label{app:proof:tau}

\begin{proof}
Under gradient descent on CE, the correct-class probability for sample $i$ evolves as:
\begin{equation}
    \frac{dq_i}{dt} = q_i(1-q_i)\cdot g_i(t), \quad g_i(t) = \frac{\lr}{\nsamples}\sum_j \kappa(x_i,x_j)\cdot r_j(t),
\end{equation}
where $\kappa(x_i,x_j) = J(x_i)^\top J(x_j)$ is the NTK kernel and $r_j = 1-q_j$ are residuals.

In the overparameterized regime ($\hwidth \gg \nsamples\log\nsamples/\lambda_0$), the NTK matrix $K$ with $K_{ij} = \kappa(x_i,x_j)$ satisfies $\lambda_{\min}(K) = \Theta(1)$~\citep{du2019gradient,jacot2018neural}. Its trace is $\tr(K) = \sum_i \kappa(x_i,x_i) = \nsamples \cdot C_{\text{arch}}$ where $C_{\text{arch}} = O(1)$.

The mean convergence rate involves contributions from same-class samples via cross-kernel entries. For sample $i$ with class $y_i$, the aggregate growth rate from $\sim\nsamples/\nclasses$ same-class samples gives:
\begin{equation}
    g_i \approx \lr \cdot C_{\text{cross}}/\nclasses + O(\lr/\nsamples),
\end{equation}
where $C_{\text{cross}} = \expect_{x,x'}[\kappa(x,x')]$ depends on architecture and data distribution but not $\nsamples$. The $O(\lr/\nsamples)$ self-term becomes negligible for large $\nsamples$.

The compression timescale is $\spectau = 1/g_{\min} \approx \nclasses/(\lr \cdot C_{\text{cross}}) = \Theta(1/\lr)$, independent of $\nsamples$.
\end{proof}

\section{Extended Experimental Results}
\label{app:experiments}

All experiments use 2-layer networks (unless noted), Gaussian mixture data ($\nsamples=200$, $\inputdim=20$, $\nclasses=5$, separation 2.0), seeds $\{42, 137, 256, 512, 1024\}$, and full-batch gradient descent. Results are averaged over seeds with standard errors reported.

\begin{table}[h!]
\centering
\small
\caption{Summary of all 23 key experiments. Full configurations and per-seed results available in the code repository.}
\label{tab:experiments}
\begin{tabular}{@{}clllr@{}}
\toprule
\textbf{\#} & \textbf{Name} & \textbf{Key Result} & \textbf{Theory Link} & \textbf{Session} \\
\midrule
E1 & Conservation verification & Drift $< 0.003\%$ & Thm~\ref{thm:conservation} & 1 \\
E2 & Conservation with bias & Bias breaks conservation & Thm~\ref{thm:conservation} & 1 \\
E3 & Drift vs.\ learning rate & Drift $\sim \lr$ scaling & Thm~\ref{thm:drift} & 1 \\
E4 & EoS conservation breaking & 5500$\times$ drift increase & Thm~\ref{thm:drift} & 2 \\
E5 & Drift scaling law & $\drexp = 1.16$, $R^2 > 0.99$ & Thm~\ref{thm:spectral} & 2 \\
E6 & Depth dependence & $\drexp$: 1.07 (2L) to 1.72 (8L) & Thm~\ref{thm:spectral} & 3 \\
E7 & Optimizer dependence & Adam: $\drexp = 0.56$ & Thm~\ref{thm:spectral} & 3 \\
E8 & Spectral universality & 14--27\% prediction error & Thm~\ref{thm:spectral} & 5 \\
E9 & Linear-ReLU gap & 2.2\% switch rate difference & Thms~\ref{thm:linear},\ref{thm:spectral} & 5 \\
E10 & Activation coupling & Smooth $\drexp$ transition & Thm~\ref{thm:spectral} & 5 \\
E11 & Interpolated activation & $\drexp$ varies with homogeneity & Thm~\ref{thm:spectral} & 5 \\
E12 & Loss function interaction & Non-additive 3-factor decomp. & Thms~\ref{thm:spectral},\ref{thm:compression} & 6 \\
E13 & CE clamping mechanism & CE $\drexp \approx 1.0$ at all widths & Thm~\ref{thm:compression} & 6 \\
E14 & Interaction with width & CE regularization grows with $\hwidth$ & Thm~\ref{thm:compression} & 6 \\
E15 & Width switch rate & Per-neuron rate $\hwidth$-independent at EoS & Thm~\ref{thm:dichotomy} & 7 \\
E16 & Time-dependent Hessian & CE $R = 0.988$ at $t = 250$ & Thm~\ref{thm:compression} & 7 \\
E17 & CE clamping effect & CE clamps $\drexp \approx 1.0$ & Thm~\ref{thm:compression} & 7 \\
E18 & CE Hessian evolution & 24$\times$ compression, $\nsamples$-indep. & Thm~\ref{thm:compression} & 8 \\
E19 & MSE fine width sweep & $\drexp{-}1 \sim \hwidth^{1.18}$ & Thm~\ref{thm:dichotomy} & 8 \\
E20 & Linear $c_k$ validation & $R = 0.847$ & Thm~\ref{thm:ck} & 8 \\
E21 & ReLU $c_k$ validation & $R > 0.80$ at all $\lr$ & Thm~\ref{thm:ck} & 9 \\
E22 & Width-dimension transition & $\hwidth^*/\inputdim$ varies: 6.0, 3.0, 1.0 & Thm~\ref{thm:dichotomy} & 9 \\
E23 & $\spectau$ vs.\ learning rate & $\spectau = 1.33/\lr + 29$, $R^2 = 0.988$ & Prop.~\ref{prop:tau} & 9 \\
\bottomrule
\end{tabular}
\end{table}

\section{Reproducibility}
\label{app:reproducibility}

\paragraph{Hardware.} Intel Core i5-1038NG7 (4 cores, 2.0 GHz), 16 GB RAM, CPU only (no GPU).

\paragraph{Software.} Python 3.12.7, PyTorch 2.2.2, NumPy 1.26.4, Matplotlib 3.9.2.

\paragraph{Random seeds.} All experiments use seeds $\{42, 137, 256, 512, 1024\}$ (or a subset of 3 seeds for computationally intensive experiments). Seeds are set for both Python's random module and PyTorch.

\paragraph{Code availability.} All experiment scripts, the shared utility library, and configuration files are available at \url{https://github.com/danielxmed/TheLocalMinimumParadox}. Each experiment saves a \texttt{config.json} file with the complete configuration and a \texttt{results.json} file with processed results, enabling exact reproduction.

\paragraph{Computational cost.} Individual experiments run in 30 seconds to 15 minutes on the hardware above. The full suite of 23 experiments requires approximately 4 hours of total CPU time.

%% file: references.bib
@inproceedings{marcotte2023conservation,
  title={Abide by the Law and Follow the Flow: Conservation Laws for Gradient Flows},
  author={Marcotte, Sibylle and Gribonval, R{\'e}mi and Peyr{\'e}, Gabriel},
  booktitle={Advances in Neural Information Processing Systems (NeurIPS)},
  year={2023}
}

@inproceedings{ghosh2025learning,
  title={Learning Dynamics of Deep Matrix Factorization Beyond the Edge of Stability},
  author={Ghosh, Nikhil and Kwon, Jongho and Wang, Zhenyu and Ravishankar, Saiprasad and Qu, Qing},
  booktitle={International Conference on Learning Representations (ICLR)},
  year={2025}
}

@inproceedings{kunin2021neural,
  title={Neural Mechanics: Symmetry and Broken Conservation Laws in Deep Learning Dynamics},
  author={Kunin, Daniel and Sagastuy-Brena, Javier and Ganguli, Surya and Yamins, Daniel L K and Tanaka, Hidenori},
  booktitle={International Conference on Learning Representations (ICLR)},
  year={2021}
}

@inproceedings{zhao2023symmetries,
  title={Symmetries, Flat Minima, and the Conserved Quantities of Gradient Flow},
  author={Zhao, Bo and Ganev, Iordan and Walters, Robin and Yu, Rose and Dehmamy, Nima},
  booktitle={International Conference on Learning Representations (ICLR)},
  year={2023}
}

@inproceedings{cohen2021gradient,
  title={Gradient Descent on Neural Networks Typically Occurs at the Edge of Stability},
  author={Cohen, Jeremy and Kaur, Simran and Li, Yuanzhi and Kolter, J Zico and Talwalkar, Ameet},
  booktitle={International Conference on Learning Representations (ICLR)},
  year={2021}
}

@inproceedings{choromanska2015loss,
  title={The Loss Surfaces of Multilayer Networks},
  author={Choromanska, Anna and Henaff, Mikael and Mathieu, Michael and Ben Arous, G{\'e}rard and LeCun, Yann},
  booktitle={International Conference on Artificial Intelligence and Statistics (AISTATS)},
  year={2015}
}

@inproceedings{jacot2018neural,
  title={Neural Tangent Kernel: Convergence and Generalization in Neural Networks},
  author={Jacot, Arthur and Gabriel, Franck and Hongler, Cl{\'e}ment},
  booktitle={Advances in Neural Information Processing Systems (NeurIPS)},
  year={2018}
}

@article{mei2018mean,
  title={A Mean Field View of the Landscape of Two-layer Neural Networks},
  author={Mei, Song and Montanari, Andrea and Nguyen, Phan-Minh},
  journal={Proceedings of the National Academy of Sciences},
  year={2018}
}

@inproceedings{du2019gradient,
  title={Gradient Descent Provably Optimizes Over-parameterized Neural Networks},
  author={Du, Simon S and Zhai, Xiyu and Poczos, Barnab{\'a}s and Singh, Aarti},
  booktitle={International Conference on Learning Representations (ICLR)},
  year={2019}
}

@inproceedings{chizat2018global,
  title={On the Global Convergence of Gradient Descent for Over-parameterized Models using Optimal Transport},
  author={Chizat, L{\'e}na{\"i}c and Bach, Francis},
  booktitle={Advances in Neural Information Processing Systems (NeurIPS)},
  year={2018}
}

@inproceedings{allenzhu2019convergence,
  title={A Convergence Theory for Deep Learning via Over-Parameterization},
  author={Allen-Zhu, Zeyuan and Li, Yuanzhi and Song, Zhao},
  booktitle={International Conference on Machine Learning (ICML)},
  year={2019}
}
